\documentclass{ifacconf}
\usepackage{amssymb}
\usepackage{comment}
\usepackage{amsmath}
\let\theoremstyle\relax
\usepackage{amsthm} 
\usepackage{graphicx}      
\usepackage{natbib}    
\usepackage{subcaption} 
\usepackage{balance} 

\usepackage{float} 
\usepackage[inline]{enumitem}
\newtheorem{innercustomgeneric}{\customgenericname}
\providecommand{\customgenericname}{}
\newcommand{\newcustomtheorem}[2]{%
  \newenvironment{#1}[1]
  {%
   \renewcommand\customgenericname{#2}%
   \renewcommand\theinnercustomgeneric{##1}%
   \innercustomgeneric
  }
  {\endinnercustomgeneric}
}

\newcustomtheorem{customthm}{Theorem}
\newcustomtheorem{customlemma}{Lemma}
\newcustomtheorem{customprop}{Proposition}

\theoremstyle{plain} 
\newtheorem{theorem}{Theorem}
\newtheorem{lemma}[theorem]{Lemma}
\newtheorem{definition}[theorem]{Definition}
\newtheorem{remark}{Remark}
\newtheorem{proposition}[theorem]{Proposition}
\usepackage[dvipsnames]{xcolor}
\newcommand{\remove}[1]{}
\begin{document}
\begin{frontmatter}

\title{Spatio-Temporal Reconnection for Multi-Robot Networks using Adaptive Prescribed-Time CBFs}

\thanks[footnoteinfo]{This work was supported in part by the U.S. National Science Foundation under Award 2528997.}

\author[First]{Hao Liu} 
\author[Second]{Yupeng Yang} 
\author[First]{Yanze Zhang}
\author[First]{Wenhao Luo}

\address[First]{Department of Computer Science, University of Illinois Chicago, Chicago, IL 60607, USA (e-mail: hliu232,yzhan361,wenhao@uic.edu).}
\address[Second]{Department of Computer Science, University of North Carolina at Charlotte, Charlotte, NC 28223, USA (e-mail: yyang52@charlotte.edu)}

\begin{abstract}               
In multi-robot systems, maintaining \textit{persistent} communication graph connectivity is often overly restrictive, especially when robots have limited communication ranges but operate in large environments. Instead, allowing robots to temporarily disconnect and later reconnect is often more desirable for efficient task execution while still ensuring timely information sharing across the team. In this paper, we propose an \textit{adaptive prescribed-time control barrier function} (adaptive PT-CBF) framework that enables robots to temporarily disconnect and re-enter the communication range within an adjustable and feasible prescribed time. Moreover, we introduce a reconnection triggering mechanism that jointly considers task execution and reconnection urgency, thereby providing a principled way to decide when reconnection should occur. Theoretical analysis justifies convergence to the satisfying reconnection within a prescribed finite time. Experimental results validate the performance of our proposed adaptive PT-CBF with improved task efficiency and satisfying reconnections. 
\end{abstract}

\end{frontmatter}
\section{Introduction}
Networked multi-robot systems often rely on wireless communication to coordinate tasks such as formation control,
area coverage,
and environment estimation.
In this setting, \emph{global connectivity} is often required where all robots remain in a single connected component, so that information from any robot can reach all others through available communication links \citep{sabattini2013distributed}.

Due to limited communication ranges, however, constraining robots' motion to enforce pairwise connections within range at all times can be overly restrictive, and may hinder robots from effectively performing their primary tasks. Existing methods \citep{luo2020behavior, yang2024integrating, yang24decentralized} address this issue by only maintaining a minimum number of edges required for global connectivity, thereby preserving communication with flexible task execution. When temporary disconnection is acceptable, such persistent connectivity constraints may impose unnecessary motion restrictions and cause robots to deviate from their nominal task trajectories.  
In practice, communication requirements vary across tasks. While critical information such as robot states and control signals requires continuous sharing, high-bandwidth data (e.g., images and videos) may only need \textit{periodic synchronization} among robots.
This motivates a relaxed formulation of connectivity maintenance. 
Under this setting, robots should be allowed to temporarily break global connectivity, disperse to different regions to carry out their individual tasks (e.g., collect local information), and return to within communication range at different times for periodic information exchange. This perspective highlights the importance of \emph{reconnection}, namely, the ability of a multi-robot team to progressively re-establish a connected communication topology even if the network becomes temporarily disconnected.

While not originally developed for controlling robots' motion for reconnection,
methods such as Finite-time Control Barrier Function (FT-CBF)~\citep{li2018formally} provide a useful tool 
for enforcing finite-time convergence, with an upper bound on the convergence time determined by the initial system state. However, an exact convergence time cannot be pre-assigned by the user, which limits its applicability in scenarios where timely reconnection is critical. To design such a controller in terms of driving the robot back to a desired set with a prespecified time,
Prescribed-time Control Barrier Functions (PT-CBF)~\citep{abel2023prescribed} have been proposed to introduce time-varying gains that grow unbounded as the system approaches the prescribed time limit, thus enforcing convergence within the desired time.
However, in the context of motion control for reconnection, it remains challenging to decide when the reconnection should occur within a mission, beyond timely system convergence (i.e., driving robots to within communication range) once activated.

Due to these limitations, a separate line of work addresses task-level scheduling of communication configurations \citep{wang2025collision, ozkahraman2020combining}, 
but these approaches generally do not provide \emph{formal} guarantees that the required communication conditions can actually be realized within a desired time, particularly when robots initially lie outside the communication range. As a result, scheduled connectivity patterns may be difficult to realize in practice when robot dynamics, control limits, and task objectives interfere with required motion reconfiguration.

In this paper, we propose an approach that allows robots to temporarily relax connectivity constraints while guaranteeing reconnection within a prescribed time window. 
The contributions are summarized as follows: 
1) we introduce an adaptive prescribed-time CBF that combines the state-dependent convergence bound of the finite-time Control Barrier Function (FT-CBF) with the prescribed-time structure of the Prescribed-Time Control Barrier Function (PT-CBF), enabling automatic selection of the prescribed time while guaranteeing reachability before the deadline, 2) we propose a spatio-temporal reconnection mechanism that adaptively triggers pairwise reconnection constraints for efficient multi-robot coordination, and 3) we provide theoretical analysis and experiments validating the effectiveness and scalability of the proposed approach.

\section{Preliminaries}

We consider a team of $N$ mobile robots operating in a $d$-dimensional workspace. The dynamics of each robot $i \in \{1, \ldots, N\}$ are denoted by:
\begin{equation}\label{eq:dynamics}
    \dot{\mathbf{x}}_i = f_i(\mathbf{x}_i)+g_i(\mathbf{x}_i)\mathbf{u}_i,
\end{equation} 
where $f_i:\mathbb{R}^{d}\mapsto\mathbb{R}^{d}$ and $g_i:\mathbb{R}^{d}\mapsto\mathbb{R}^{d\times q}$ are locally Lipschitz continuous. $\mathbf{u}_i\in\mathbb{R}^{q}$ is the control input.

\subsection{Collision Avoidance}
Let $\mathbf{x}=\{\mathbf{x}_1,\ldots,\mathbf{x}_N\}\in\mathbb{R}^{dN}$ denote the joint robot state, and let $R_\mathrm{s}>0$ be the minimum inter-robot safety distance. 
For each robot pair $i$ and $j$, safety function is defined as
\begin{equation}
    h^\mathrm{s}_{i,j}(\mathbf{x})
    =
    \|\mathbf{x}_i-\mathbf{x}_j\|^2-R_\mathrm{s}^2,
    \quad \forall i>j.
    \label{eq:safe_function}
\end{equation}
The corresponding pairwise safe set can be defined by 
$h^\mathrm{s}_{i,j}(\mathbf{x})\ge 0$. By combining these pairwise safety sets, the overall safe set of the robotic team is expressed as
\begin{equation}
\mathcal{H}^\mathrm{s}
=
\bigcap_{i>j}
\{\mathbf{x}\in\mathbb{R}^{dN}:h^\mathrm{s}_{i,j}(\mathbf{x})\ge0\}.
\label{eq:overall_safe_sets}
\end{equation}
Using the standard Control Barrier Function (CBF) condition~\citep{ames2019control}, 
the time derivative of each safety function is constrained by
$\dot h(\mathbf{x},\mathbf{u})+\mathcal{K}(h(\mathbf{x}))\ge0$, 
which prevents $h(\mathbf{x})$ from becoming negative once it is initially nonnegative.
Thus, the admissible control set that renders $\mathcal{H}^\mathrm{s}$ forward invariant is given by\footnote{In the rest of this paper, we select $\mathcal{K}(h(\cdot))=\gamma h(\cdot)$ with $\gamma$ as a user-defined parameter~\citep{ames2019control}.}
\begin{equation}
\mathcal{B}^\mathrm{s}(\mathbf{x})
=
\left\{
\mathbf{u}\in\mathbb{R}^{qN}:
\dot{h}^\mathrm{s}_{i,j}(\mathbf{x},\mathbf{u})
+
\gamma h^\mathrm{s}_{i,j}(\mathbf{x})
\ge 0,\ \forall i>j
\right\}. \notag
\label{eq:cbf_safe}
\end{equation}
\subsection{Joint Global Connectivity}
\label{sec:joint-twgc}

In this paper, we model inter-robot communication as a proximity-based undirected graph $\mathcal{G}(t)=(\mathcal{V},\mathcal{E}(t))$, where 
$\mathcal{V}=\{1,\ldots,N\}$. The edge set $\mathcal{E}(t) \subseteq \mathcal{V} \times \mathcal{V}$ is induced by inter-robot distances as follows. Let 
$R_\mathrm{c} \in \mathbb{R}_{>0}$ denote the communication range with $R_\mathrm{c} > R_\mathrm{s}$. Given the joint robot state 
$\mathbf{x}$, we define the communication set as
\begin{align}
    h^\mathrm{c}_{i,j}(\mathbf{x}) 
        &= R_\mathrm{c}^2 - \bigl\| \mathbf{x}_i - \mathbf{x}_j \bigr\|^2,
        \quad \forall (v_i,v_j) \in \mathcal{E}(t), \\
    \mathcal{H}^\mathrm{c}_{i,j} 
        &= \bigl\{\mathbf{x} \in \mathbb{R}^{dN} : 
            h^\mathrm{c}_{i,j}(\mathbf{x}) \geq 0 \bigr\}.
            \label{eq:communication_constraint}
\end{align}
An undirected edge $(v_i,v_j)\in\mathcal{E}(t)$ exists when robots $i$ and $j$ are within the communication range, i.e., $\mathbf{x}(t)\in\mathcal{H}^\mathrm{c}_{i,j}$. The graph $\mathcal{G}(t)$ is \textit{globally connected} if every pair of vertices is connected by a path.
Note that, unlike conventional setting that require $\mathcal{G}(t)$ to be globally connected at all times, we assume edges in $\mathcal{E}(t)$ are short range and only need to be established temporarily (e.g., for exchanging high-bandwidth data periodically) so that global connectivity can be enforced over a prescribed time window through a sequence of edge reconnection. Coordination during disconnection is assumed to be maintained through external mechanisms that are not explicitly modeled in this work, e.g., via an auxiliary communication layer with a much larger effective range for continuous lightweight coordination information exchange (e.g., states and control signals) \citep{rouvcek2019darpa}.

\begin{definition}\textbf{Union of graph}.\label{def:union_graph}
Given a time-varying communication graph $\mathcal{G}(t)$, a time window length $T > 0$ and a starting time $t_0 \ge 0$, we define the \textbf{union of graph} $\mathcal{G}^\mathrm{u}(t_0)$ of $\mathcal{G}(t)$ over the interval $[t_0, t_0+T]$ as
\begin{equation}
\label{eq:union_graph}
   \mathcal{G}^\mathrm{u}(t_0, T) \triangleq
    \Bigl(\mathcal{V},\;\bigcup_{\tau \in [t_0,\,t_0+T]} \mathcal{E}(\tau)\Bigr) =(\mathcal{V},\mathcal{E}^{\mathrm{u}}).
\end{equation}
An edge $(v_i,v_j)$ exists in $\mathcal{G}^\mathrm{u}(t_0,T)$, if it is connected at least once during the time interval $[t_0, t_0+T]$.
\end{definition}
Based on the union graph, we define joint global connectivity as follows.

\begin{definition}\textbf{Joint global connectivity}.
\label{def:joint-twgc}
Given a start time $t_0$ and a time window length $T$,
the communication graph $\mathcal{G}$ is said to satisfy 
\textbf{joint global connectivity} over $[t_0,t_0+T]$, if the collection of all edges that appear during $[t_0, t_0+T]$ contains a path between any two vertices, i.e., union of graph $\mathcal{G}^\mathrm{u}(t_0,T)$ is connected. Such a communication graph $\mathcal{G}$ is called \textbf{jointly globally connected}.
\end{definition}

As a result, joint global connectivity ensures that information available at $t=t_0$ can be propagated from each robot to all others by $t=t_0+T$, even if $\mathcal{G}(t)$ is disconnected at intermediate times. This allows robots to temporarily split into separate components for efficient task execution while maintaining information flow.

Given a prescribed globally connected union graph  
$\bar{\mathcal{G}}^\mathrm{u} (t_0, T)\\=(\mathcal{V},\bar{\mathcal{E}}^\mathrm{u})$, 
we define the desired set for satisfying joint global connectivity as
\begin{align}\label{eq:Hset}
\mathcal{H}^\mathrm{c}_T\bigl(\bar{\mathcal{G}}^{\mathrm{u}}(t_0,T), t_0\bigr)
    =
    \Bigl\{
        \mathbf{x}\in\mathbb{R}^{dN} &| \forall (i,j)\in\bar{\mathcal{E}}^{\mathrm{u}},
        \\\exists \tau\in[t_0,t_0+T],\; \notag
        &
        \mathbf{x}(\tau)\in\mathcal{H}^\mathrm{c}_{i,j} 
    \Bigr\}.
\end{align}
\vspace{-0.8cm}
\subsection{Problem Statement} 
\label{sec:problem}
In this paper, we assume 
each robot $i$ has been assigned a nominal task-related controller $\mathbf{u}^\mathrm{nom}_i \in \mathbb{R}^q$, and denote their joint nominal controller by 
$\mathbf{u}^\mathrm{nom} = [\mathbf{u}_1^\mathrm{nom},\ldots,\mathbf{u}_N^\mathrm{nom}] \in \mathbb{R}^{qN}$.  
The initial state $\mathbf{x}_0$ lies in the safety set $\mathcal{H}^\mathrm{s}$. Given the real-time proximity-based communication graph $\mathcal{G}(t) = (\mathcal{V}, \mathcal{E}(t))$ and a predetermined globally connected union of graph $\bar{\mathcal{G}}^{\mathrm{u}}(t_0,T) = (\mathcal{V}, \bar{\mathcal{E}}^{\mathrm{u}})$ specifying the desired edge combination to be periodically realized for any $k$th 
time window $[t_0+kT,t_0+(k+1)T),\forall k\in \mathbb{Z}^{\ge 0}$, the \textbf{objective} is to design a joint multi-robot controller $\mathbf{u}(t)$ such that (i) the safety constraints are satisfied, (ii)
the union of communication graphs $\mathcal{G}(t)$ for $t\in [t_0+kT,t_0+(k+1)T)$ contains all edges in $\bar{\mathcal{E}}^{\mathrm{u}}$ at least once to satisfy joint global connectivity with respect to $\mathcal{G}(t)$ (i.e., $\bar{\mathcal{G}}^{\mathrm{u}}(t_0,T)$ is connected), and (iii) the resulting control deviation from the nominal joint controller $\mathbf{u}^\mathrm{nom}$ is minimized. Formally, at each control update, we consider the following Quadratic Programming (QP) problem:
\begin{align}
    \mathbf{u}^* 
    &= \text{argmin}_{\mathbf{u}} \; \sum_{i=1}^{N} \bigl\| \mathbf{u}_i - \mathbf{u}^\mathrm{nom}_i \bigr\|^2 
    \label{eq:twgc_obj} \\
    \text{s.t.}
    & \mathbf{u} \in \mathcal{B}^\mathrm{s}(\mathbf{x}), \mathbf{x} \in \mathcal{H}^\mathrm{c}_T\bigl(\bar{\mathcal{G}}^{\mathrm{u}}(t_0,T), t_0+kT\bigr), \forall k\in \mathbb{Z}^{\ge 0}
    \label{eq:twgc_conn_const} \\
    & \|\mathbf{u}_i\| \le \alpha_i,\; \forall i = 1,\ldots,N.
    \label{eq:twgc_input_bound} 
\end{align}
where $\alpha_i > 0$ denotes the control bound for robot $i$.

\section{Method}
\subsection{Adaptive Prescribed-Time CBF}\label{sec:adp_cbf}

To enforce the desired reconnection condition in~\eqref{eq:Hset}, we first derive control constraints for each selected edge. When an edge $(v_i,v_j)$ is activated with $h^\mathrm{c}_{i,j}(\mathbf{x})<0$, the controller must drive the robot pair back into the communication set before the prescribed deadline. Prescribed-time control barrier functions (PT-CBF)~\citep{abel2023prescribed} provide a natural tool for this purpose.
\begin{definition}\textbf{Blow-up function}.
A function $\eta:[a,b)\to\mathbb{R}_{\ge0}$ with $a<b$ is called a \textbf{blow-up function} if it is continuous, strictly increasing, and satisfies
$\lim_{t\to b^-}\eta(t)=+\infty$.
\end{definition}
Using this definition, we now recall the standard definition of a prescribed-time control barrier function (PT-CBF).

\begin{lemma}\textbf{Prescribed-time control barrier function (PT-CBF)}. [Summarized from \citep{abel2023prescribed}]\label{lem:prescribed_cbf}
   Given a team of robots with each robot's dynamics defined by~\eqref{eq:dynamics}, let $h:\mathbb{R}^n\to\mathbb{R}$ be continuously differentiable and define the safe set $\mathcal{C}=\{\mathbf{x}\mid h(\mathbf{x})\ge0\}$. The function $h(\mathbf{x})$ is a \textbf{PT-CBF} over the prescribed horizon $[t_0,t_0+T_p)$ if there exist $c>0$ and a blow-up function $\mu(t)$ such that  
\begin{equation}
    \dot{h}(\mathbf{x}) + c \,\mu(t)\, h(\mathbf{x}) \;\ge\; 0,
    \quad \forall t \in [t_0, t_0 + T_p).
    \label{eq:ptcbf}
\end{equation}
If \eqref{eq:ptcbf} holds, the system enters $\mathcal{C}$ no later than $t_0+T_p$, even when starting from $h(\mathbf{x}_0)<0$.
\end{lemma}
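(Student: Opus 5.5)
The plan is a comparison argument: turn the differential inequality \eqref{eq:ptcbf} into an explicit lower bound on $h(\mathbf{x}(t))$ along the closed-loop trajectory, then show the blow-up of $\mu$ forces that bound to be nonnegative before $t_0+T_p$.

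Fix a trajectory $\mathbf{x}(t)$ of \eqref{eq:dynamics} under a control satisfying \eqref{eq:ptcbf} on $[t_0,t_0+T_p)$, and write $\phi(t)=h(\mathbf{x}(t))$. Define the integrating factor $M(t)=\exp\bigl(c\int_{t_0}^{t}\mu(s)\,ds\bigr)$, which is well defined and finite on $[t_0,t_0+T_p)$ because $\mu$ is continuous there. Inequality \eqref{eq:ptcbf} reads $\dot\phi + c\mu\phi\ge 0$. Multiplying by $M(t)>0$ gives $\frac{d}{dt}\bigl(M(t)\phi(t)\bigr)\ge 0$. Integrating from $t_0$ yields
\begin{equation*}
\phi(t)\;\ge\;\phi(t_0)\,\exp\Bigl(-c\int_{t_0}^{t}\mu(s)\,ds\Bigr),\qquad t\in[t_0,t_0+T_p).
\end{equation*}
This step is routine, but it assumes the solution exists on the whole interval. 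I would take that as part of the hypothesis, with locally Lipschitz dynamics and a locally Lipschitz feedback.

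The bound alone does not give entry into $\mathcal{C}$ when $\phi(t_0)<0$. It only says $\phi(t)\ge -|\phi(t_0)|e^{-c\int\mu}$, and that tends to $0^-$ only if $\int_{t_0}^{t}\mu\to\infty$ as $t\to t_0+T_p$. So the key step is to use the blow-up property, and this is the main obstacle. A blow-up function with $\mu\to\infty$ need not be non-integrable, for instance $\mu(t)=(t_0+T_p-t)^{-1/2}$. I would therefore follow \citep{abel2023prescribed} and restrict to their class of blow-up functions, e.g.\ $\mu(t)=T_p/(t_0+T_p-t)$ or its powers. For this class $\int_{t_0}^{t}\mu\,ds = T_p\ln\!\frac{T_p}{t_0+T_p-t}\to\infty$. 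The bound then becomes $\phi(t)\ge \phi(t_0)\bigl(\frac{t_0+T_p-t}{T_p}\bigr)^{cT_p}$, which tends to $0$, so $\lim\inf_{t\to (t_0+T_p)^-}\phi(t)\ge 0$. By continuity of $h$ and of the trajectory, $h(\mathbf{x}(t_0+T_p))\ge 0$, which is the claimed entry into $\mathcal{C}$ no later than $t_0+T_p$. I would state this integrability requirement explicitly as the condition inherited from the cited work.

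For completeness, I would add forward invariance. If $\phi(t_1)\ge 0$ for some earlier $t_1$, the same integrating-factor inequality started at $t_1$ gives $\phi(t)\ge \phi(t_1)e^{-c\int_{t_1}^t\mu}\ge 0$ for all $t\ge t_1$. Hence once $\mathcal{C}$ is reached it is not left during the horizon. For an initial condition with $\phi(t_0)\ge 0$ the statement is therefore immediate.
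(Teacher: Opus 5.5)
The paper does not prove this lemma; it is imported from \citep{abel2023prescribed}, so there is no in-paper argument to compare against. Your integrating-factor comparison is the standard proof, and it is correct under the hypothesis you add.

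That hypothesis is not a technicality. With the paper's own definition of a blow-up function, the lemma as stated is false. Take a single-integrator robot $\dot x=u$ with $h(x)=x$, the feedback $u=-c\mu(t)x$, and $\mu(t)=(t_0+T_p-t)^{-1/2}$. Then \eqref{eq:ptcbf} holds with equality, and the solution extends continuously to $t_0+T_p$. However, $h(x(t_0+T_p))=h(x_0)e^{-2c\sqrt{T_p}}<0$. So divergence of $\int_{t_0}^{t}\mu(s)\,ds$ is exactly the missing condition. It costs nothing for the paper's purposes, since the gain $\bigl(T_p/(T_p-t)\bigr)^m+\alpha$ used in \eqref{eq:mu_function} satisfies it.

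Your route also differs from the mechanism in the paper's proof of Theorem~\ref{theorem:adaptive_ptcbf}. That proof discards the blow-up through $\mu\ge\mu_0$ and gets finite time from $\rho<1$. Here $h$ enters linearly, so a gain that is merely bounded below yields only exponential approach. The blow-up integral is the only source of convergence, and you identified that correctly.

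One caveat concerns your last step. Even for non-integrable $\mu$, the equality case $h(\mathbf{x}(t))=h(\mathbf{x}_0)\bigl(\tfrac{t_0+T_p-t}{T_p}\bigr)^{cT_p}<0$ shows that $h$ stays negative on all of $[t_0,t_0+T_p)$. Entry therefore happens only in the limit.

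Concluding $h(\mathbf{x}(t_0+T_p))\ge 0$ ``by continuity of the trajectory'' needs $\mathbf{x}(t)$ to have a limit as $t\to t_0+T_p$. Existence on the half-open interval does not give this, because the gain blows up and the input need not stay bounded. Already in the scalar example with $\mu=T_p/(t_0+T_p-t)$, $u$ is bounded only when $cT_p\ge 1$. In higher dimensions, a controller satisfying \eqref{eq:ptcbf} can also move the state along level sets of $h$ without converging.

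You have two options:
\begin{enumerate*}[label=(\roman*)]
\item assume the solution extends continuously to $t_0+T_p$ (for example, bounded input and bounded state), or
\item state the conclusion as $\liminf_{t\to(t_0+T_p)^-}h(\mathbf{x}(t))\ge 0$.
\end{enumerate*}
The second is what your argument actually proves, and it is the natural reading of ``enters $\mathcal{C}$ no later than $t_0+T_p$.''
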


However, PT-CBF requires $T_p$ to be specified a priori, which is restrictive in state-dependent settings. To address this, we introduce the Finite-Time Control Barrier Function (FT-CBF) framework to obtain a computable convergence-time bound.

\begin{lemma}\textbf{Finite-time control barrier function (FT-CBF)}. 
[Summarized from \citep{li2018formally}]
\label{lem:ftcbf}
 Given a team of robots with each robot's dynamics defined by~\eqref{eq:dynamics}, let $\mathcal{C}=\{\mathbf{x}\in\mathbb{R}^n\mid h(\mathbf{x})\ge0\}$ be a desired set defined by a continuously differentiable function 
$h:\mathbb{R}^n\to\mathbb{R}$. 
The function $h$ is an \textbf{FT-CBF} on a set $\mathcal{D}$ if there exist $c>0$ and $\rho\in[0,1)$ such
\begin{equation}
\dot{h}(\mathbf{x},\mathbf{u})
+
c\cdot\operatorname{sign}(h(\mathbf{x}))
|h(\mathbf{x})|^{\rho}
\ge 0.
\label{eq:ftcbf}
\end{equation}
Any continuous controller satisfying~\eqref{eq:ftcbf} renders $\mathcal{C}$ forward invariant when $\mathbf{x}_0\in\mathcal{C}$, and drives $\mathbf{x}(t)$ into $\mathcal{C}$ within a finite time $T_\mathrm{FTCBF}=\frac{1}{c(1-\rho)}|h(\mathbf{x}_0)|^{1-\rho}$ when $\mathbf{x}_0\in\mathcal{D}\setminus\mathcal{C}$.
\end{lemma}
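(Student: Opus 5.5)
The plan is a comparison-lemma argument applied to the scalar function $h(t)=h(\mathbf{x}(t))$ along a closed-loop trajectory generated by a continuous controller satisfying \eqref{eq:ftcbf}. Since $f_i,g_i$ are locally Lipschitz and the controller is continuous, solutions exist, and $h(t)$ is absolutely continuous with
\begin{equation*}
\dot h(t)\ \ge\ -c\,\operatorname{sign}(h)\,|h|^{\rho}
\end{equation*}
almost everywhere. I would compare $h(t)$ with the scalar ODE $\dot y=-c\,\operatorname{sign}(y)|y|^{\rho}$, $y(t_0)=h(\mathbf{x}_0)$. This ODE is non-Lipschitz at $y=0$ for $\rho<1$, so I will not invoke uniqueness. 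Instead I will argue directly on sign-definite intervals.

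\emph{Forward invariance.} Suppose $h(\mathbf{x}_0)\ge0$ and, for contradiction, $h(t_2)<0$ for some $t_2$. Let $t_1=\sup\{t<t_2: h(t)\ge0\}$, so $h(t_1)=0$ by continuity and $h<0$ on $(t_1,t_2]$. On that interval
\begin{equation*}
\dot h\ \ge\ c|h|^{\rho}>0,
\end{equation*}
so $h$ is strictly increasing there. Hence $h(t_2)>h(t_1)=0$, a contradiction. Therefore $\mathcal{C}$ is forward invariant.

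\emph{Finite-time convergence.} Let $\mathbf{x}_0\in\mathcal{D}\setminus\mathcal{C}$, so $h_0:=h(\mathbf{x}_0)<0$. As long as $h(t)<0$, set $V(t)=|h(t)|^{1-\rho}=(-h)^{1-\rho}$. Then
\begin{equation*}
\dot V=(1-\rho)(-h)^{-\rho}(-\dot h)\ \le\ (1-\rho)(-h)^{-\rho}\bigl(-c(-h)^{\rho}\bigr)=-c(1-\rho).
\end{equation*}
Integrating gives
\begin{equation*}
V(t)\ \le\ |h_0|^{1-\rho}-c(1-\rho)(t-t_0).
\end{equation*}
Since $V\ge0$, the trajectory cannot remain in $\{h<0\}$ beyond $t_0+\frac{|h_0|^{1-\rho}}{c(1-\rho)}=t_0+T_\mathrm{FTCBF}$. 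By continuity of $h$, there is a first time $t^*\le t_0+T_\mathrm{FTCBF}$ with $h(t^*)=0$, and forward invariance from $t^*$ onward keeps the state in $\mathcal{C}$.

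\emph{Main obstacle.} The delicate step is making the differentiation of $V$ rigorous, because $|h|^{1-\rho}$ is only differentiable where $h\neq0$. I handle this by working only on the open interval where $h<0$, where $V$ is $C^1$. For $\rho=0$ the sign term is discontinuous at $h=0$, but this does not matter in either argument, since both use only the strict sign of $h$ on open intervals.

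A second, implicit requirement is that the trajectory stays in $\mathcal{D}$ and exists until $t^*$. I will take this as part of the standing assumption that \eqref{eq:ftcbf} is satisfied along the solution on $\mathcal{D}$, as in \citep{li2018formally}. With that assumption, neither finite escape time nor leaving $\mathcal{D}$ can occur before $t^*$.
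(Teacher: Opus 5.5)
Your proof is correct. The paper states this lemma without proof (it cites \citep{li2018formally}), but your argument is essentially the one it uses for Theorem~\ref{theorem:adaptive_ptcbf}: a decrease estimate for $|h|$ on $\{h<0\}$ integrated by separation of variables, which your choice $V=|h|^{1-\rho}$ with $\dot V\le -c(1-\rho)$ makes explicit. Your first-crossing argument for forward invariance is more careful than the paper's appeal to the standard CBF condition, since it uses only $\dot h>0$ on $\{h<0\}$ and so needs neither Lipschitz continuity of $|h|^{\rho}$ at $0$ nor continuity of the sign term when $\rho=0$.
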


Unlike PT-CBF, FT-CBF guarantees convergence to the safe set within a finite time bound $T_{\mathrm{FTCBF}}$ that depends on the initial state rather than a user-specified constant. However, this bound is often conservative. 

Motivated by this, we propose an Adaptive Prescribed-Time Control Barrier Function (Adaptive PT-CBF), which determines the prescribed time $T_p$ online from the current state and thus reduces manual tuning. Theorem~\ref{theorem:adaptive_ptcbf} establishes the existence of a control input that ensures convergence within the resulting adaptive time $T_p$.

\begin{theorem}\textbf{Adaptive prescribed-time control barrier function}.
\label{theorem:adaptive_ptcbf}
Given a team of robots with each robot's dynamics defined by~\eqref{eq:dynamics}, let $h:\mathbb{R}^n \to \mathbb{R}$ be a continuously differentiable function defining the desired set $\mathcal{C} := \{\mathbf{x} \in \mathbb{R}^n \mid h(\mathbf{x}) \ge 0\}$. Then $h$ is an \textbf{adaptive prescribed-time control barrier function} if there exist real parameter $\rho \in (0,1)$ and $c > 0$, so that
\begin{equation}
    \dot{h}(\mathbf{x})
    + c\,\mu(t)\,\mathrm{sign}\bigl(h(\mathbf{x})\bigr)\,\bigl|h(\mathbf{x})\bigr|^{\rho}
    \;\ge\; 0,
    \label{eq:ptcbf_theorem}
\end{equation}
holds for all $t \in [0,T_p)$ where the time-varying scaling factor
$\mu(t)$ is defined as
\begin{equation}
   \mu(t)
   =
   \left(
       \frac{T_p}{T_p - t}
   \right)^m + \alpha,
   \quad t \in [0,T_p),
   \label{eq:mu_function}
\end{equation}
with $\alpha >0$ and $m \in 2\mathbb{N}$ (i.e., $m$ is a positive even integer). The prescribed time $T_p$ is chosen adaptively according to
\begin{equation}
    T_p =
    \frac{|h(\mathbf{x}_0)|^{\,1-\rho}}{c(1-\rho)\,\mu_0},
    \qquad
    \mu_0 = 1+\alpha > 1.
    \label{eq:Tp_adaptive}
\end{equation}
With this, the following properties hold for the corresponding closed-loop solution $\mathbf{x}(t)$: 1) (\textbf{Forward invariance}) If $\mathbf{x}_0 \in \mathcal{C}$, then
    $h(\mathbf{x}(t)) \ge 0$ for all $t \ge 0$; that is, the safe set $\mathcal{C}$ is forward invariant. 2) (\textbf{Prescribed-time reachability}) If $\mathbf{x}_0 \notin \mathcal{C}$, i.e.,
    $h(\mathbf{x}_0)<0$, then there exists a time $t^\star \le T_p$ such that
    $h(\mathbf{x}(t^\star)) = 0$, and $h(\mathbf{x}(t)) \ge 0$ for all $t \ge t^\star$. 
\end{theorem}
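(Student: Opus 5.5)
Both claims reduce to a scalar comparison argument for $y(t):=h(\mathbf{x}(t))$ along the closed-loop solution. By \eqref{eq:ptcbf_theorem}, $y$ satisfies the differential inequality $\dot y \ge -c\,\mu(t)\,\mathrm{sign}(y)|y|^{\rho}$ on $[0,T_p)$. The function $\mu$ in \eqref{eq:mu_function} is continuous, strictly increasing, and positive, and it satisfies $\mu(t)\ge \mu(0)=1+\alpha=\mu_0$ on $[0,T_p)$. Since $m$ is even, $\bigl(T_p/(T_p-t)\bigr)^m\ge 1$ there. In other words, $\mu-\alpha$ is a blow-up function in the sense of the definition above, bounded below by its initial value. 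The plan is to compare $y$ with the solution of the scalar ODE $\dot z=-c\,\mu(t)\,\mathrm{sign}(z)|z|^\rho$, $z(0)=y(0)$. We can treat that ODE in closed form through the substitution $w=\mathrm{sign}(z)|z|^{1-\rho}$. This mirrors the FT-CBF argument of Lemma~\ref{lem:ftcbf}, with time reparametrized by $s(t)=\int_0^t\mu(\tau)\,d\tau$.

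\textbf{Forward invariance.} Suppose $y(0)\ge0$. If $y(t_1)<0$ at some $t_1$, take the last time $t_a<t_1$ with $y(t_a)=0$. On $(t_a,t_1]$ we have $y<0$, so $\dot y\ge c\,\mu(t)|y|^\rho>0$. Hence $y$ is increasing there, and $y(t_1)>y(t_a)=0$, a contradiction. This handles $[0,T_p)$. For $t\ge T_p$, the statement implicitly continues with a standard CBF or FT-CBF condition once $h\ge0$; I would note this and apply the same argument with $\mu$ replaced by a bounded positive gain, since only the sign structure is used.

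\textbf{Prescribed-time reachability.} Suppose $y(0)<0$. While $y<0$, define $v:=|y|^{1-\rho}=(-y)^{1-\rho}$. Then
\[
\dot v=(1-\rho)(-y)^{-\rho}(-\dot y)\le -(1-\rho)\,c\,\mu(t)\le -(1-\rho)\,c\,\mu_0 .
\]
Integrating gives $v(t)\le |h(\mathbf{x}_0)|^{1-\rho}-c(1-\rho)\mu_0 t$. The right-hand side vanishes exactly at $t=T_p$ from \eqref{eq:Tp_adaptive}. So $y$ cannot remain negative on all of $[0,T_p)$ with $v$ staying positive. By continuity of $y$, there is a first time $t^\star\le T_p$ with $y(t^\star)=0$.

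Using the sharper bound $\int_0^t\mu\,d\tau$ instead of $\mu_0 t$ would in fact give $t^\star<T_p$, because $\int_0^t\mu$ diverges as $t\to T_p^-$. Either version suffices. After $t^\star$, the forward-invariance part started at $t^\star$ gives $h(\mathbf{x}(t))\ge0$ for all $t\ge t^\star$.

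\textbf{Main obstacle.} The delicate step is the differentiation of $|y|^{1-\rho}$. It is only valid while $y\neq0$, and the right-hand side $|y|^\rho$ is non-Lipschitz at $0$, so uniqueness of solutions fails there. I would avoid the problem by working only on the open interval where $y<0$, where everything is smooth. I would then conclude by continuity, and in the invariance argument rely solely on the sign of $\dot y$, never on uniqueness. A secondary point is existence of the closed-loop solution up to $T_p$ given the unbounded gain. I would assume, as the statement does, that a continuous control satisfying \eqref{eq:ptcbf_theorem} exists on $[0,T_p)$. The bound on $v$ then shows that $y$ is nondecreasing while negative, so the solution stays bounded in the $h$-coordinate.
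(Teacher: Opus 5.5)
Your proposal is correct and follows the paper's proof. The paper also sets $V=|h|$ on the region $h<0$, uses $\mu(t)\ge\mu_0$ to obtain $\dot V\le -c\mu_0V^{\rho}$, and integrates by separation of variables (your substitution $v=|y|^{1-\rho}$) to recover exactly $T_p$ in \eqref{eq:Tp_adaptive}; it then treats the case $h\ge0$ as a standard CBF condition, where you give a direct sign-based argument and correctly note, as the paper does not, that the condition must hold beyond $T_p$ for the ``for all $t\ge0$'' claims.
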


\begin{proof}
When $h(\mathbf{x})<0$, \eqref{eq:ptcbf_theorem} with
$\mathrm{sign}(h)=-1$ becomes $\dot{h}(\mathbf{x}) \;\ge\; c\,\mu(t)\,\bigl|h(\mathbf{x})\bigr|^{\rho}$. Define $V(\mathbf{x}) := |h(\mathbf{x})| = -h(\mathbf{x})$ in this region. Then
$\dot{V}(\mathbf{x}) = -\dot{h}(\mathbf{x}) \le -c\,\mu(t)\,V(\mathbf{x})^{\rho}$.
Since $\mu(t) \ge \mu_0$ for all $t \in [0,T_p(\mathbf{x}_0))$, we obtain the conservative inequality $\dot{V}(\mathbf{x}) \;\le\; -c\,\mu_0\,V(\mathbf{x})^{\rho}, \quad 0<\rho<1$.
By separation of variables and integration from $V(\mathbf{x}_0)=|h(\mathbf{x}_0)|$ down to $0$,
one obtains convergence bound
    $T_{\mathrm{finite}}
    \;\le\;
    \frac{|h(\mathbf{x}_0)|^{\,1-\rho}}{c(1-\rho)\,\mu_0},$
which by construction, equals $T_p$ in \eqref{eq:Tp_adaptive}. This proves 2):
starting outside $\mathcal{C}$, the trajectory reaches $h(\mathbf{x})=0$ within the prescribed time
$T_p(\mathbf{x}_0)$ and remains in $\mathcal{C}$ thereafter. For 1), when $h(\mathbf{x}) \ge 0$ we have $\mathrm{sign}(h)=1$ and
\eqref{eq:ptcbf_theorem} yields $\dot{h}(\mathbf{x}) \;\ge\; -c\,\mu(t)\,h(\mathbf{x})^{\rho}$,
which is a standard control barrier condition \citep{ames2019control}. Thus, if $h(\mathbf{x}_0)\ge 0$, trajectory
remains in $\mathcal{C}$ for all $t \ge 0$, establishing set forward invariance property.
\end{proof}

\begin{remark}
\label{rem:Tp_vs_FTCBF}
Under Lemma~\ref{lem:ftcbf}, FT-CBF guarantees convergence within $T_{\mathrm{FTCBF}}$. Since~\eqref{eq:Tp_adaptive} gives $T_p=T_{\mathrm{FTCBF}}/\mu_0$ with $\mu_0=1+\alpha>1$, we have $T_p<T_{\mathrm{FTCBF}}$. Hence, adaptive PT-CBF yields a less conservative convergence-time bound than FT-CBF.
\end{remark}

With Theorem~\ref{theorem:adaptive_ptcbf}, given any desired union of graph
$\bar{\mathcal{G}}^{\mathrm{u}}(t_0,T)$, the corresponding control set is defined as
\begin{align}    \label{eq:ptcbf_condition}
    \mathcal{B}^{\mathrm{r}}&(\mathbf{x})
    = \Big\{ \mathbf{u}\in\mathbb{R}^{qN} \,\Big|\,
    \dot{h}_{i,j}^\mathrm{c}(\mathbf{x})
    +\\& c\,\mu(t)\,\operatorname{sign}\!\big(h_{i,j}^\mathrm{c}(\mathbf{x})\big)\,
      \lvert h_{i,j}^\mathrm{c}(\mathbf{x}) \rvert^{\rho}
    \ge 0,\;
    \forall (v_i,v_j) \in \bar{\mathcal{E}}^{\mathrm{u}}(t)
    \Big\}. \notag
\end{align}
Any control input $\mathbf{u}(t) \in \mathcal{B}^{\mathrm{r}}(\mathbf{x}(t))$ guarantees that all desired pairwise reconnection edges satisfy the communication constraint within the prescribed time horizon.

\subsection{Spatio-temporal Reconnection Mechanism}
\label{sec:spatio_graph}

We introduce a scheduling mechanism that activates each edge connection constraint only when reconnection is needed and deactivates it after the edge is realized for flexible task execution. To decide \emph{when} each desired edge should reconnect, we first define a spatio-temporal edge weight for every $(v_i,v_j)\in\bar{\mathcal{E}}^\mathrm{u}$:
{\small
\begin{equation}
\begin{aligned}
w_{i,j}(t)
= {}& \dot{h}^\mathrm{c}_{i,j}\bigl(\mathbf{x}(t),
\mathbf{u}^\mathrm{nom}_i(t),
\mathbf{u}^\mathrm{nom}_j(t)\bigr)
- c\,\mu(0)\bigl|h^\mathrm{c}_{i,j}(\mathbf{x}(t))\bigr|^\rho
+ \phi(t),
\end{aligned}
\label{eq:weight}
\end{equation}}

where first two terms measure \textit{spatial} effort required for edge $(v_i,v_j)$ to reconnect under nominal controls, while $\phi(t)$ is blow-up \textit{temporal} term satisfying 
$\lim_{t\to (t_{\mathrm{max\_trigger}}^k)^-}\\\phi(t)=+\infty$. 
Thus, the reconnection urgency increases as the latest triggering time $t_{\mathrm{max\_trigger}}^k$ approaches.

The edge $(v_i,v_j)$ is activated when $w_{i,j}(t)>0$. Let $\tau_{i,j}^k$ denote the triggering time and $t_{i,j}^k$ denote the reconnection time satisfying $h^\mathrm{c}_{i,j}(\mathbf{x}(t_{i,j}^k))\ge0$. By Theorem~\ref{theorem:adaptive_ptcbf} and the definition of $t_{\mathrm{max\_trigger}}^k$, any edge triggered before $t_{\mathrm{max\_trigger}}^k$ reconnects within the same window, i.e., $t_{i,j}^k\le t_0+(k+1)T$.

Under the adaptive PT-CBF reconnection controller, once an edge $(v_i,v_j)$ is triggered at $\tau_{i,j}^k$, Theorem~\ref{theorem:adaptive_ptcbf} guarantees the existence of a first reconnection time
$t_{i,j}^k \in [\tau_{i,j}^k,\,\tau_{i,j}^k+T_p^{\max}]$
such that
$h^\mathrm{c}_{i,j}(\mathbf{x}(t_{i,j}^k))\ge0$.
Since each edge is triggered before the latest allowable triggering time
$t_{\mathrm{max\_trigger}}^k$, we have $t_{i,j}^k
\le
t_{\mathrm{max\_trigger}}^k+T_p^{\max}
=
t_0+(k+1)T$, which ensures that the reconnection is completed within the same time window.
After the edge is connected for the first time in the $k$th window, its corresponding constraint is deactivated for the remainder of window.

For any time $t$ in the $k_{\textrm{th}}$ time window $[t_0+kT,t_0+(k+1)T)$, we define the \emph{active edge set} as
\begin{equation}
\mathcal{E}_k^{\mathrm{act}}(t)
= \Bigl\{ (v_i,v_j) \in \bar{\mathcal{E}}^{\mathrm{u}}
\;\Big|\;
t \in [\tau_{i,j}^k,\, t_{i,j}^k) \Bigr\}.
\label{eq:active_edge_set}
\end{equation}
Thus, each edge constraint is active only between its triggering time $\tau_{i,j}^k$ and its first reconnection time $t_{i,j}^k$, and is deactivated for the rest of window after reconnection. The corresponding admissible set applies adaptive PT-CBF constraints only to active edges:
\begin{align}
&\mathcal{B}^{\mathrm{c},k}(\mathbf{x},t)
= \Big\{ \mathbf{u}\in\mathbb{R}^{qN} \,\Big|\,
    \dot{h}_{i,j}^\mathrm{c}\big(\mathbf{x}(t)\big)+ \\
    & c\,\mu(t)\,\operatorname{sign}\!\big(h_{i,j}^\mathrm{c}(\mathbf{x}(t))\big)\,
      \big\lvert h_{i,j}^\mathrm{c}(\mathbf{x}(t)) \big\rvert^{\rho}
    \ge 0,\;
   \forall (v_i,v_j) \in \mathcal{E}_k^{\mathrm{act}}(t)
    \Big\}. \notag
\end{align}
At any time $t\in [t_0+kT,t_0+(k+1)T)$, the control input is obtained by solving the following QP:
\begin{align}
\mathbf{u}^*(t)
&= \arg\min_{\mathbf{u}} \;\;
    \sum_{i=1}^{N} \bigl\| \mathbf{u}_i - \mathbf{u}_i^{\mathrm{nom}}(t) \bigr\|^2
    \label{eq:twgc_qp_obj} \\[1mm]
\text{s.t.}\quad
& \mathbf{u} \in \mathcal{B}^{\mathrm{s}}(\mathbf{x}), \;\mathbf{u} \in \mathcal{B}^{\mathrm{c},k}(\mathbf{x},t),
    \label{eq:twgc_qp_connect_set} \\[1mm]
& \|\mathbf{u}_i\| \le \alpha_i,\quad i = 1,\ldots,N.
    \label{eq:twgc_qp_bounds_set}
\end{align}
\noindent
Here $\mathcal{B}^{\mathrm{s}}(\mathbf{x})$ enforces collision avoidance, while $\mathcal{B}^{\mathrm{c},k}(\mathbf{x},t)$ enforces reconnection only for active edges. The resulting problem is convex QP with \emph{linear} control constraints, which can be solved efficiently using standard solvers.
\begin{proposition}\textbf{Joint global connectivity under the QP controller}.
\label{prop:joint_connectivity_qp}
Let $\bar{\mathcal{G}}^{\mathrm{u}}(t_0+kT,T)=(\mathcal{V},\bar{\mathcal{E}}^{\mathrm{u}})$ be a desired globally connected union graph. Under QP controller \eqref{eq:twgc_qp_obj}--\eqref{eq:twgc_qp_bounds_set}, with triggering rule \eqref{eq:weight} and active edge set \eqref{eq:active_edge_set}, every edge $(v_i,v_j)\in\bar{\mathcal{E}}^{\mathrm{u}}$ is realized at least once in each window $[t_0+kT,t_0+(k+1)T)$. Consequently, $\bar{\mathcal{E}}^{\mathrm{u}}
\subseteq
\tilde{\mathcal{E}}^{\mathrm{u}}_k
=
\bigcup_{t\in[t_0+kT,t_0+(k+1)T)}
\{(v_i,v_j)\mid h^\mathrm{c}_{i,j}(\mathbf{x}(t))\ge0\}$, and the resulting union graph
$\tilde{\mathcal{G}}^{\mathrm{u}}_k=(\mathcal{V},\tilde{\mathcal{E}}^{\mathrm{u}}_k)$
is globally connected for every $k$. Hence, induced communication graph is \textbf{jointly globally connected} over every reconnection window.
\end{proposition}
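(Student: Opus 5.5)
The plan is to fix an arbitrary window index $k$ and an arbitrary desired edge $(v_i,v_j)\in\bar{\mathcal{E}}^{\mathrm{u}}$. We first show that this edge is realized somewhere in $[t_0+kT,t_0+(k+1)T)$. Global connectivity of the union graph then follows by monotonicity, since any graph that contains the edges of a connected spanning graph is itself connected. We split into two cases, depending on whether the edge is already connected before the trigger fires.

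\emph{Case (a): the edge is already realized.} Suppose $h^\mathrm{c}_{i,j}(\mathbf{x}(t))\ge 0$ for some $t$ in the window before any trigger. Then $(v_i,v_j)\in\tilde{\mathcal{E}}^{\mathrm{u}}_k$ directly from the definition of $\tilde{\mathcal{E}}^{\mathrm{u}}_k$, and there is nothing to prove.

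\emph{Case (b): the edge is not yet realized.} First we establish that the edge is triggered no later than $t^k_{\mathrm{max\_trigger}}$. The spatial part of $w_{i,j}(t)$ in \eqref{eq:weight} is bounded along trajectories, because of the following.
\begin{enumerate*}[label=(\roman*)]
\item The nominal inputs are bounded (or at least locally bounded), and $f_i,g_i$ are locally Lipschitz, hence bounded on the compact reachable set over a finite window.
\item $|h^\mathrm{c}_{i,j}|^\rho$ is continuous.
\end{enumerate*}
Meanwhile $\phi(t)\to+\infty$ as $t\to (t^k_{\mathrm{max\_trigger}})^-$. By continuity of $w_{i,j}$ in $t$, there is therefore some $\tau^k_{i,j}<t^k_{\mathrm{max\_trigger}}$ with $w_{i,j}(\tau^k_{i,j})>0$, so the edge enters $\mathcal{E}^{\mathrm{act}}_k$.

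Next, from $\tau^k_{i,j}$ onward the QP enforces the adaptive PT-CBF condition \eqref{eq:ptcbf_theorem} for this edge, with $\mu$ restarted at the trigger time. Applying Theorem~\ref{theorem:adaptive_ptcbf} with initial time shifted to $\tau^k_{i,j}$ gives a first reconnection time $t^k_{i,j}\le \tau^k_{i,j}+T_p(\mathbf{x}(\tau^k_{i,j}))$. Taking $T_p^{\max}$ as a uniform upper bound on the adaptive horizon, and using the design relation $t^k_{\mathrm{max\_trigger}}+T_p^{\max}=t_0+(k+1)T$ stated in Section~\ref{sec:spatio_graph}, we get $t^k_{i,j}\le t_0+(k+1)T$. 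Hence $h^\mathrm{c}_{i,j}(\mathbf{x}(t^k_{i,j}))\ge0$ and the edge lies in $\tilde{\mathcal{E}}^{\mathrm{u}}_k$. Strict triggering before $t^k_{\mathrm{max\_trigger}}$ gives strict inequality, so the reconnection falls in the half-open window. Deactivation after $t^k_{i,j}$ does not remove the edge from the union, since the union records any instant of connection.

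\emph{Conclusion.} Since $(v_i,v_j)$ was arbitrary, $\bar{\mathcal{E}}^{\mathrm{u}}\subseteq\tilde{\mathcal{E}}^{\mathrm{u}}_k$. Every path in the connected graph $\bar{\mathcal{G}}^{\mathrm{u}}$ is also a path in $\tilde{\mathcal{G}}^{\mathrm{u}}_k$, so the latter is connected. This is joint global connectivity per Definition~\ref{def:joint-twgc}, and the argument holds for every $k$.

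\emph{Main obstacle.} The delicate step is invoking Theorem~\ref{theorem:adaptive_ptcbf}, which presumes the PT-CBF inequality is actually satisfied by the closed-loop input. Here the QP couples it with the safety constraints $\mathcal{B}^\mathrm{s}$, the input bounds $\|\mathbf{u}_i\|\le\alpha_i$, and other simultaneously active edges. We would therefore state explicitly a standing assumption that the QP is feasible along the trajectory, i.e., that $\mathcal{B}^\mathrm{s}\cap\mathcal{B}^{\mathrm{c},k}\cap\{\|\mathbf{u}_i\|\le\alpha_i\}$ is nonempty, with a solution regular enough (e.g., locally Lipschitz) for existence of closed-loop solutions. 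We would also assume that $T_p^{\max}$ bounds $T_p$ over the states reachable at trigger times, for instance via a bound on $|h^\mathrm{c}_{i,j}|$ from bounded velocities over the window. With these assumptions made explicit, the remaining steps are direct.
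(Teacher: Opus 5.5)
Your proof is correct and follows essentially the same route as the paper. Both arguments force a trigger before $t^k_{\mathrm{max\_trigger}}$ via the blow-up of $\phi$, apply Theorem~\ref{theorem:adaptive_ptcbf} from the trigger time using $T_p^{\max}$ and $t^k_{\mathrm{max\_trigger}}+T_p^{\max}=t_0+(k+1)T$, and then conclude connectivity from $\bar{\mathcal{E}}^{\mathrm{u}}\subseteq\tilde{\mathcal{E}}^{\mathrm{u}}_k$. Your additions make explicit what the paper's proof uses tacitly: the already-connected case, a bound on the spatial part of $w_{i,j}$ so that $\phi$ really forces $w_{i,j}>0$, restarting $\mu$ at the trigger time, and the QP-feasibility and $T_p^{\max}$ assumptions.
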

\begin{proof}
Consider a reconnection window interval $[t_0+kT,t_0+(k+1)T)$ and any desired edge $(v_i,v_j)\in\bar{\mathcal{E}}^{\mathrm{u}}$. If $w_{i,j}(t_0+kT)>0$, the edge is triggered immediately; otherwise, the blow-up temporal term $\phi(t)$ in \eqref{eq:weight} ensures that there exists a triggering time
$\tau_{i,j}^k\in[t_0+kT,t_{\mathrm{max\_trigger}}^k)$
such that $w_{i,j}(\tau_{i,j}^k)>0$. By \eqref{eq:active_edge_set}, the edge remains active until its first reconnection time, so its adaptive PT-CBF constraint is enforced by the QP through $\mathcal{B}^{\mathrm{c},k}(\mathbf{x},t)$. By Theorem~\ref{theorem:adaptive_ptcbf}, there exists
$t_{i,j}^k\in[\tau_{i,j}^k,\tau_{i,j}^k+T_p^{\max}]$
such that
$h^\mathrm{c}_{i,j}(\mathbf{x}(t_{i,j}^k))\ge0$.
Since $\tau_{i,j}^k<t_{\mathrm{max\_trigger}}^k$ and
$t_{\mathrm{max\_trigger}}^k+T_p^{\max}=t_0+(k+1)T$, this reconnection occurs before the window ends. Therefore, every edge in $\bar{\mathcal{E}}^{\mathrm{u}}$ belongs to $\tilde{\mathcal{E}}^{\mathrm{u}}_k$. Since $\bar{\mathcal{G}}^{\mathrm{u}}(t_0+kT,T)$ is globally connected and
$\bar{\mathcal{E}}^{\mathrm{u}}\subseteq\tilde{\mathcal{E}}^{\mathrm{u}}_k$, the union graph
$\tilde{\mathcal{G}}^{\mathrm{u}}_k(t_0+kT,T)$ is globally connected. This proves joint global connectivity over every window.
\end{proof}
\vspace{-0.15cm}
\begin{figure*}[!htbp]
    \centering
    \begin{subfigure}{0.28\textwidth}
\includegraphics[width=\textwidth]{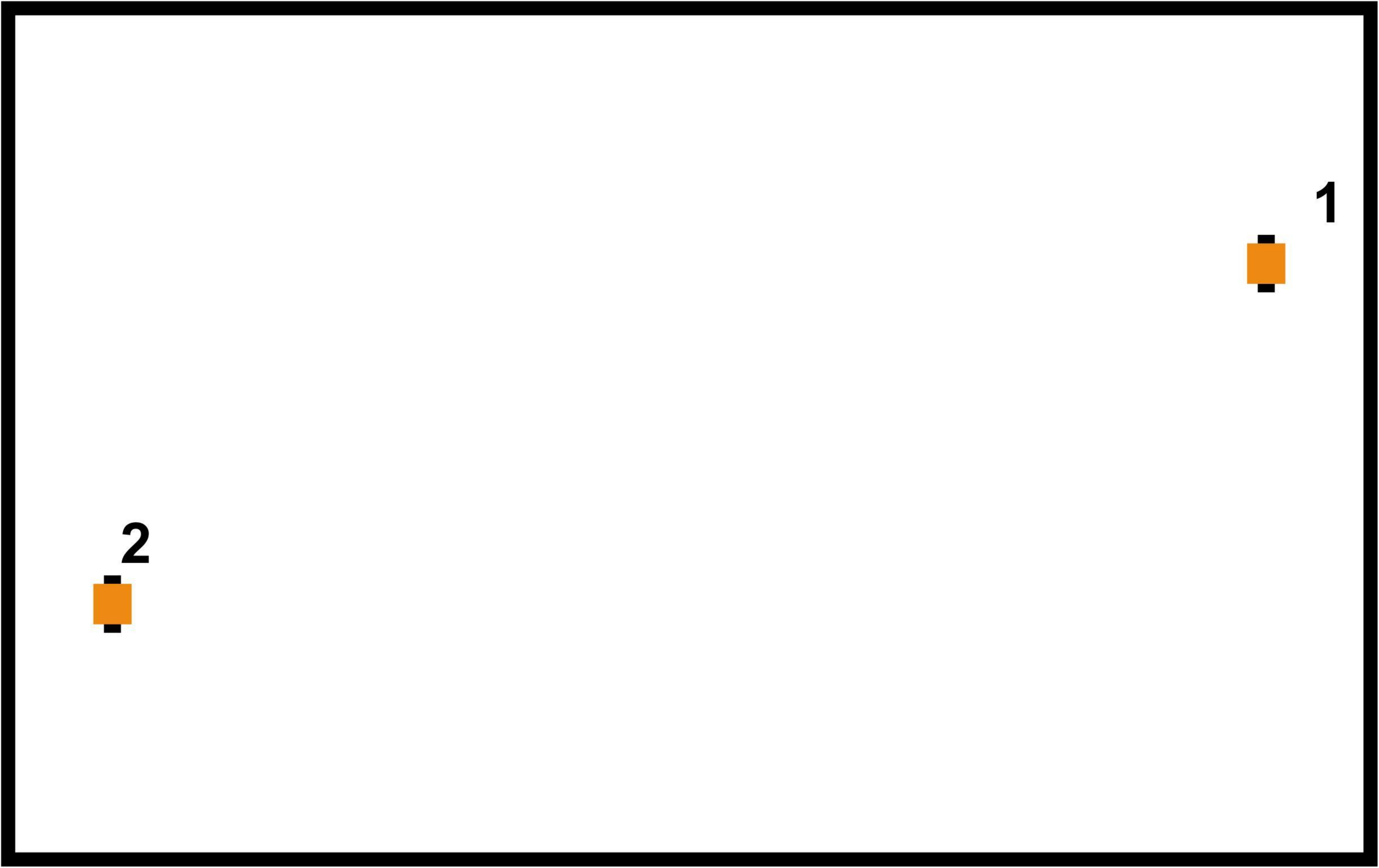}
    \caption{\textbf{Adaptive PT-CBF}, $t=0\textrm{s}$}
    \label{fig:adaptive_pt_cbf_1}
  \end{subfigure}
  \begin{subfigure}{0.28\textwidth}
\includegraphics[width=\textwidth]{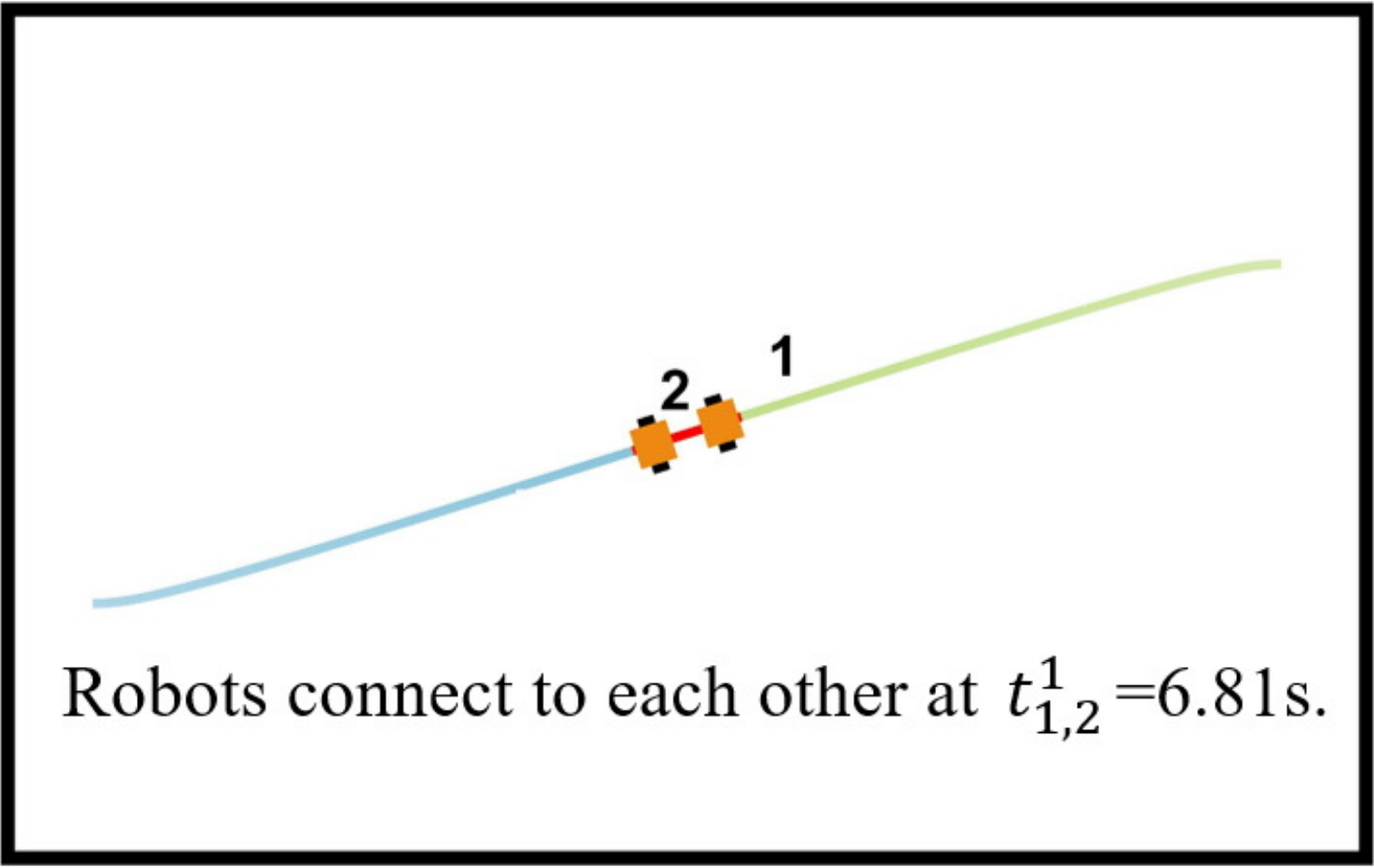}
    \caption{\textbf{Adaptive PT-CBF}, $t=6.81\textrm{s}$}
    \label{fig:adaptive_pt_cbf_2}
  \end{subfigure}
    \begin{subfigure}{0.28\textwidth}
\includegraphics[width=\textwidth]{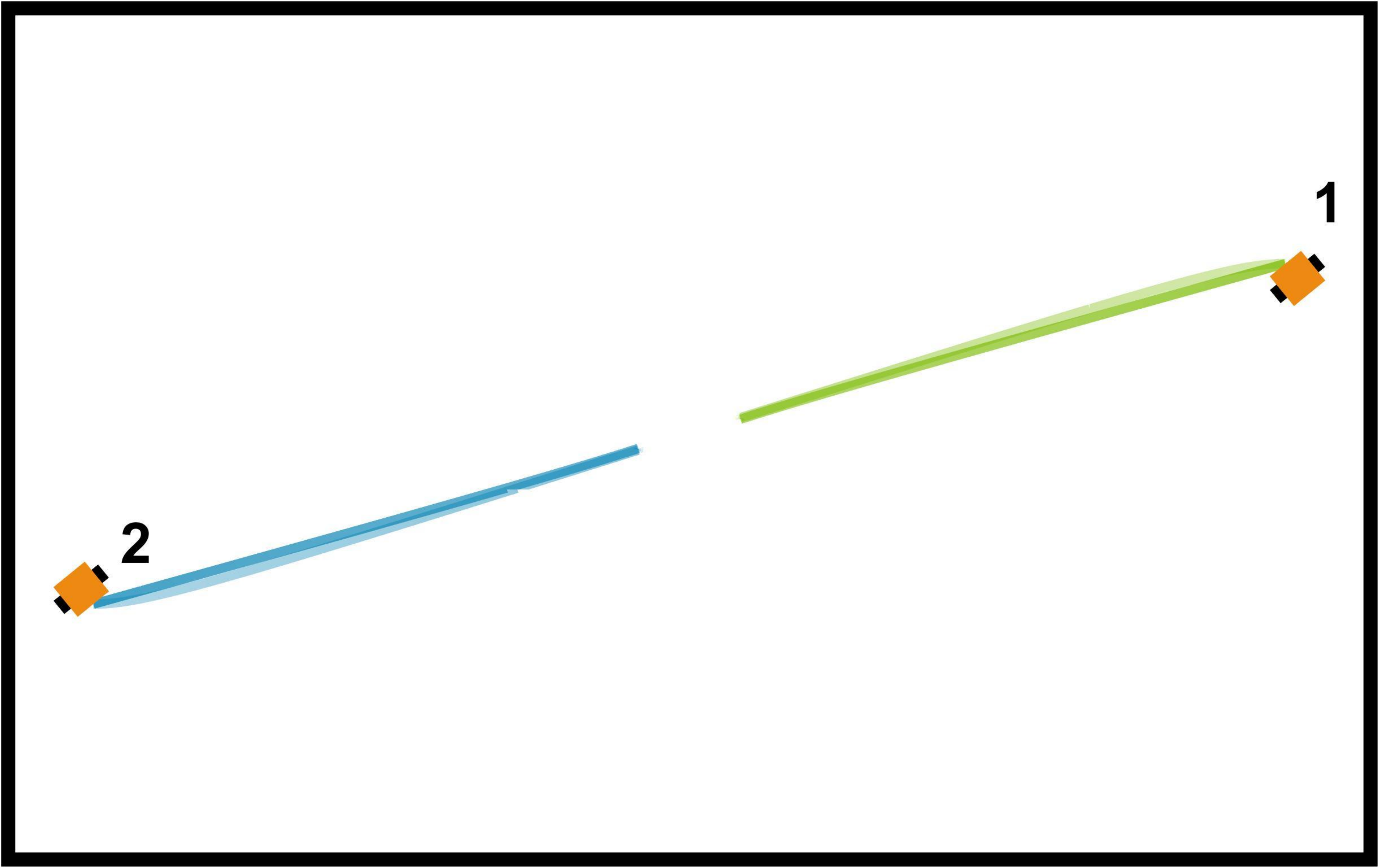}
    \caption{\textbf{Adaptive PT-CBF}, $t=15\textrm{s}$}
    \label{fig:adaptive_pt_cbf_3}
  \end{subfigure}
     \begin{subfigure}{0.28\textwidth}
\includegraphics[width=\textwidth]{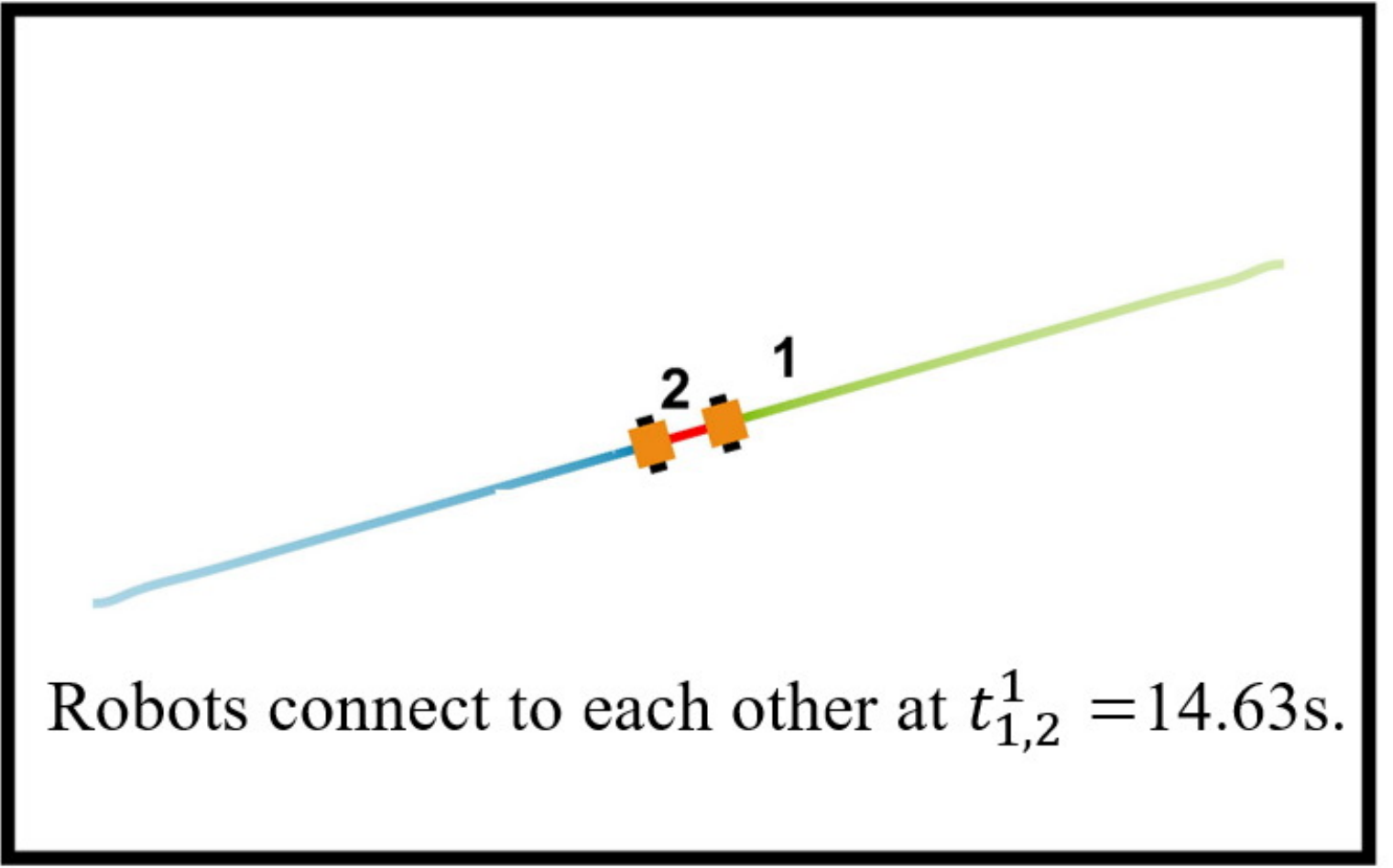} 
    \caption{FT-CBF, $t=15\textrm{s}$}
    \label{fig:ft_cbf}
  \end{subfigure}
       \begin{subfigure}{0.277\textwidth}
\includegraphics[width=\textwidth]{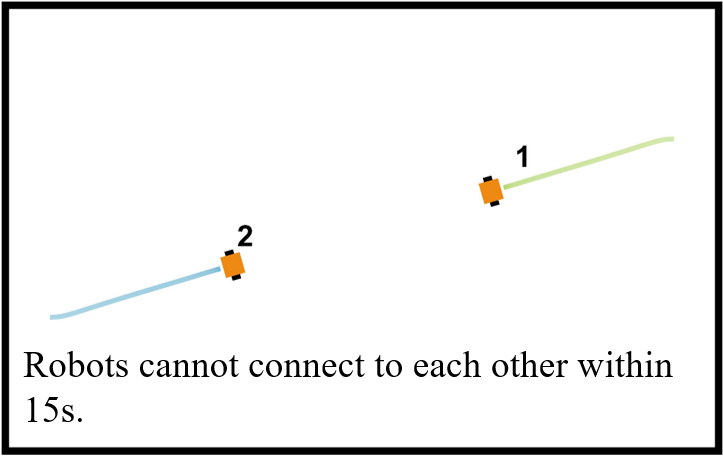}
        \caption{PT-CBF ($T_p=0.5\textrm{s}$), $t=15\textrm{s}$}
    \label{fig:pt_cbf_t_small}
  \end{subfigure}
   \begin{subfigure}{0.28\textwidth}
\includegraphics[width=\textwidth]{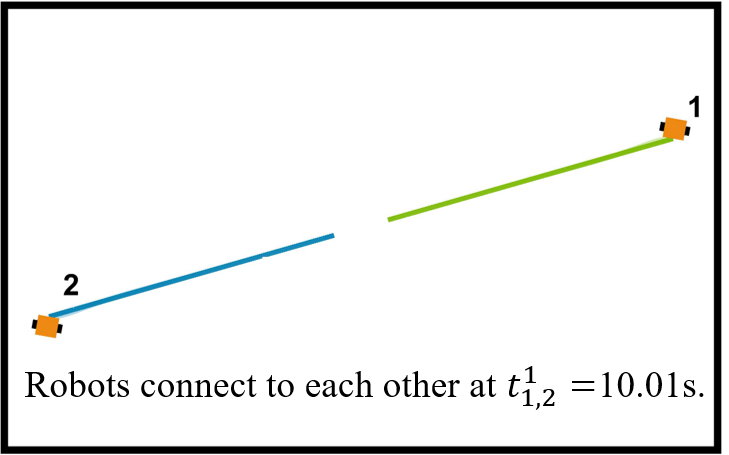}
\caption{PT-CBF ($T_p=15\textrm{s}$), $t=15\textrm{s}$}
    \label{fig:pt_cbf_t_large}
  \end{subfigure}
\caption{Snapshots comparing reconnection performance for \textbf{Adaptive PT-CBF (ours)}, FT-CBF, and PT-CBF.}
  \label{fig:simulation}
\end{figure*}
\begin{figure*}[!htbp]
\centering
   \begin{subfigure}{0.24\textwidth}
\includegraphics[width=\textwidth,height=0.7\textwidth]{image/compare_edge_weight_python.pdf}
    \caption{Spatio-Temporal weight}
    \label{fig:edge_weight}
  \end{subfigure}
   \begin{subfigure}{0.24\textwidth}
\includegraphics[width=\textwidth,height=0.7\textwidth]{image/compare_hij_python.pdf}
    \caption{$R_\mathrm{c}^2-D_{i,j}^2$}
    \label{fig:h_function}
  \end{subfigure}
  \begin{subfigure}{0.24\textwidth}
\includegraphics[width=\textwidth,height=0.7\textwidth]{image/compare_interrobot_distance_python.pdf}
    \caption{Inter-robot distance}
    \label{fig:distance}
  \end{subfigure}
   \begin{subfigure}{0.24\textwidth}
\includegraphics[width=\textwidth,height=0.7\textwidth]{image/compare_velocity_python.pdf}
    \caption{Average speed}
    \label{fig:control_deviation}
  \end{subfigure}
\caption{Comparison of the reconnection process under Adaptive PT-CBF (ours), FT-CBF, and PT-CBF.  
(a) The spatio-temporal weight.  
(b) Communication-barrier value $R_\mathrm{c}^2-D_{i,j}^2$.  
(c) Inter-robot distance relative to the communication range $R_\mathrm{c}$.  
(d) Average robot speed.  }
\label{fig:matlab_numerical}
\end{figure*}
\begin{figure*}[!htbp]
    \centering
    \begin{subfigure}{0.3\textwidth}
\includegraphics[width=\textwidth,height=0.5\textwidth]{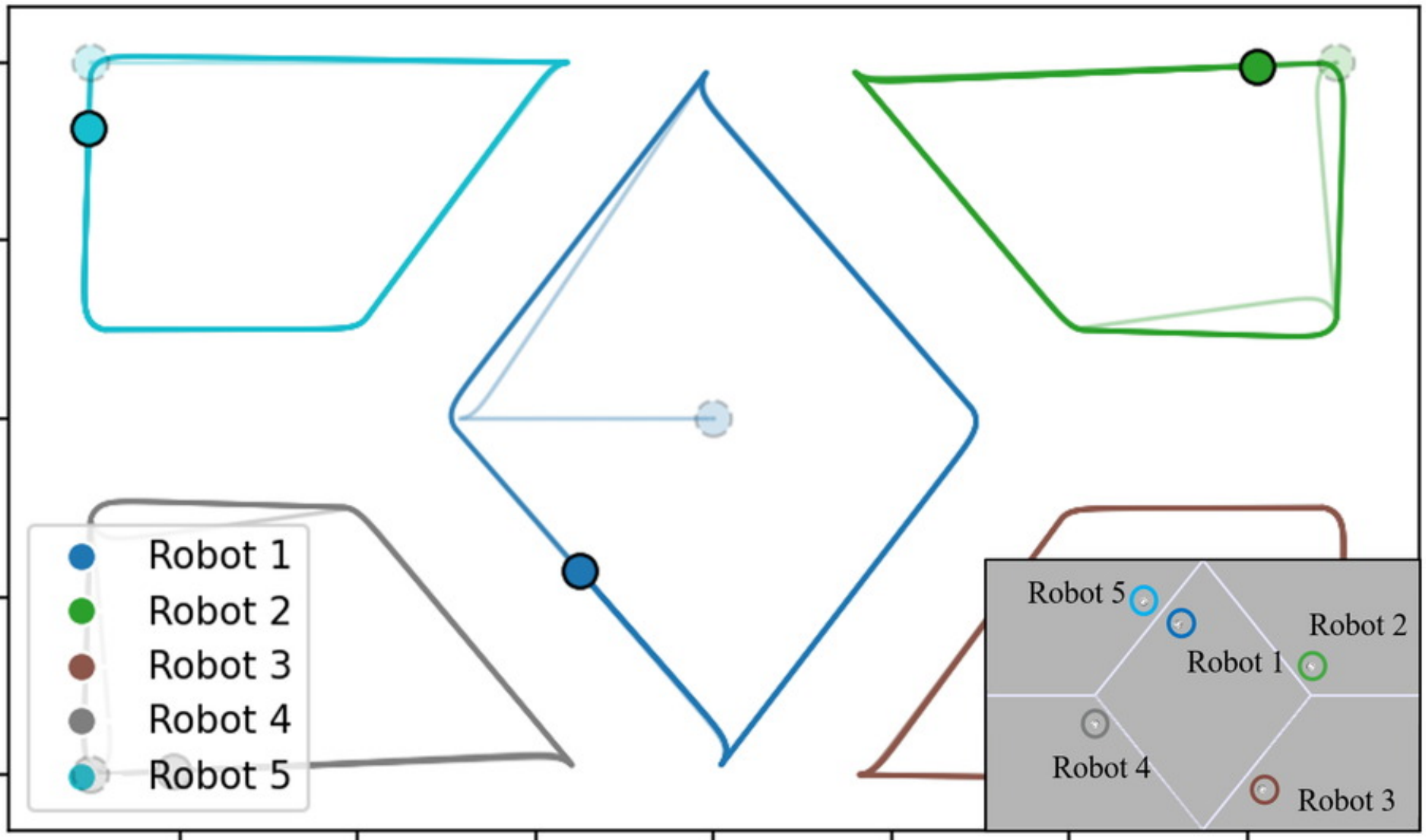}
    \caption{Original task}
    \label{fig:cop_1}
  \end{subfigure}
  \begin{subfigure}{0.3\textwidth}
\includegraphics[width=\textwidth,height=0.5\textwidth]{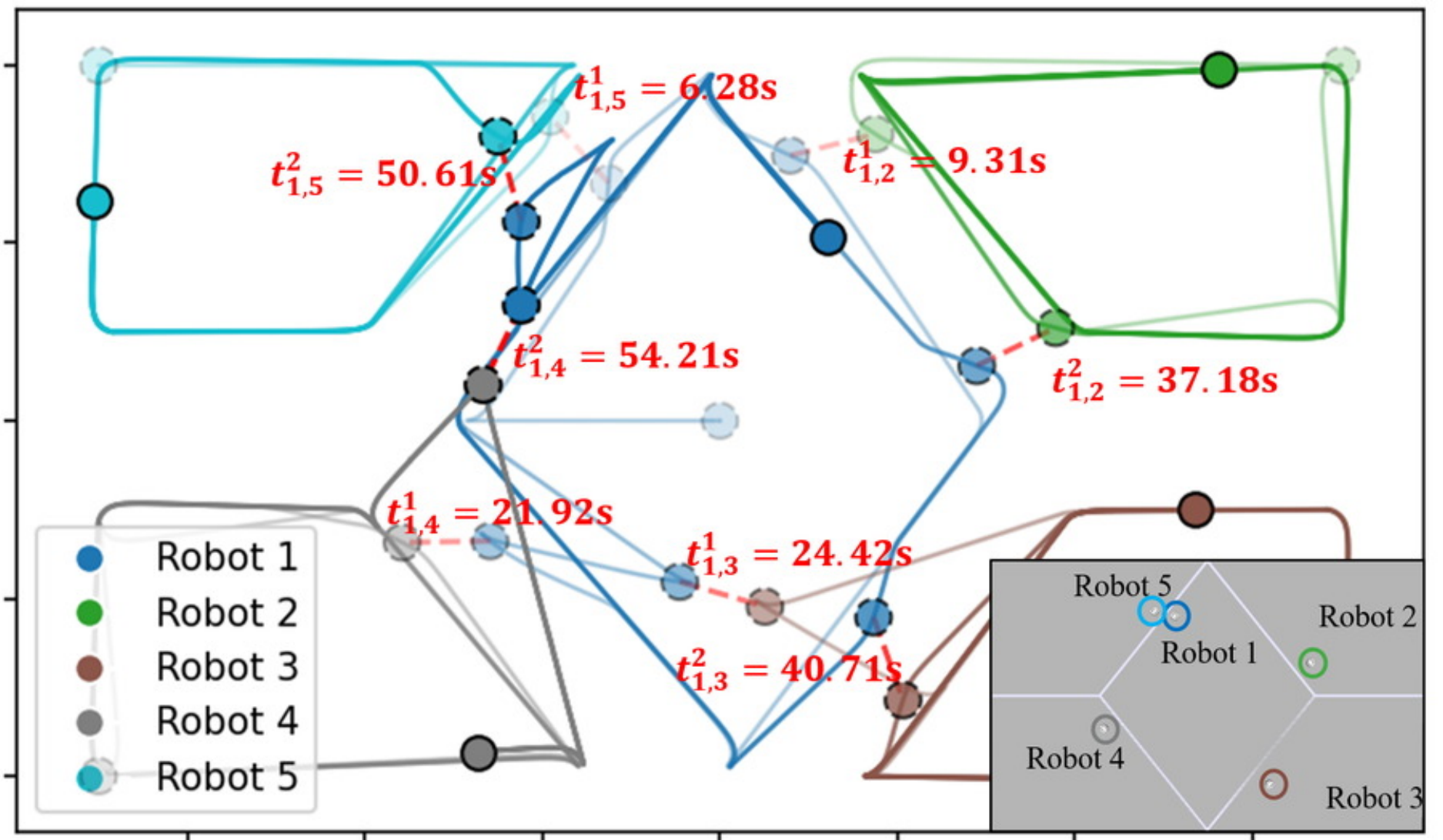}
    \caption{\textbf{Our Adaptive PT-CBF}}
\label{fig:cop_2}
  \end{subfigure}
    \begin{subfigure}{0.3\textwidth}
\includegraphics[width=\textwidth,height=0.5\textwidth]{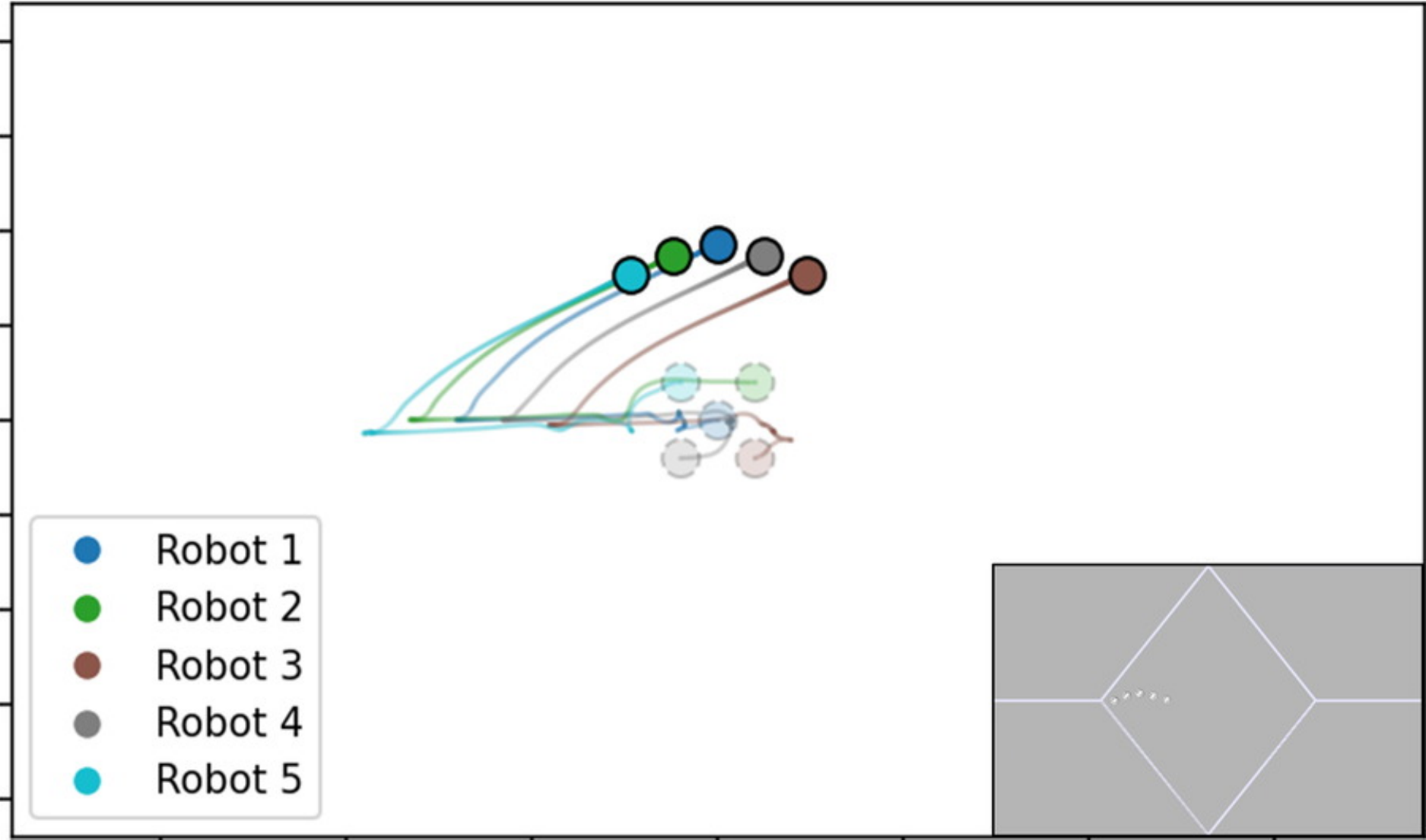}
    \caption{MCCST}
    \label{fig:cop_3}
  \end{subfigure}
\caption{Simulation snapshots and trajectories: bottom-right snapshot captured at $t=6.2\text{s}$, and all trajectories for the (a) Original Task, (b) \textbf{Adaptive PT-CBF (ours)}, and (c) MCCST baseline.}
  \label{fig:simulation_cop}
\end{figure*}
\begin{figure}[!htbp]
\centering
   \begin{subfigure}{0.2\textwidth}
\includegraphics[width=\textwidth]{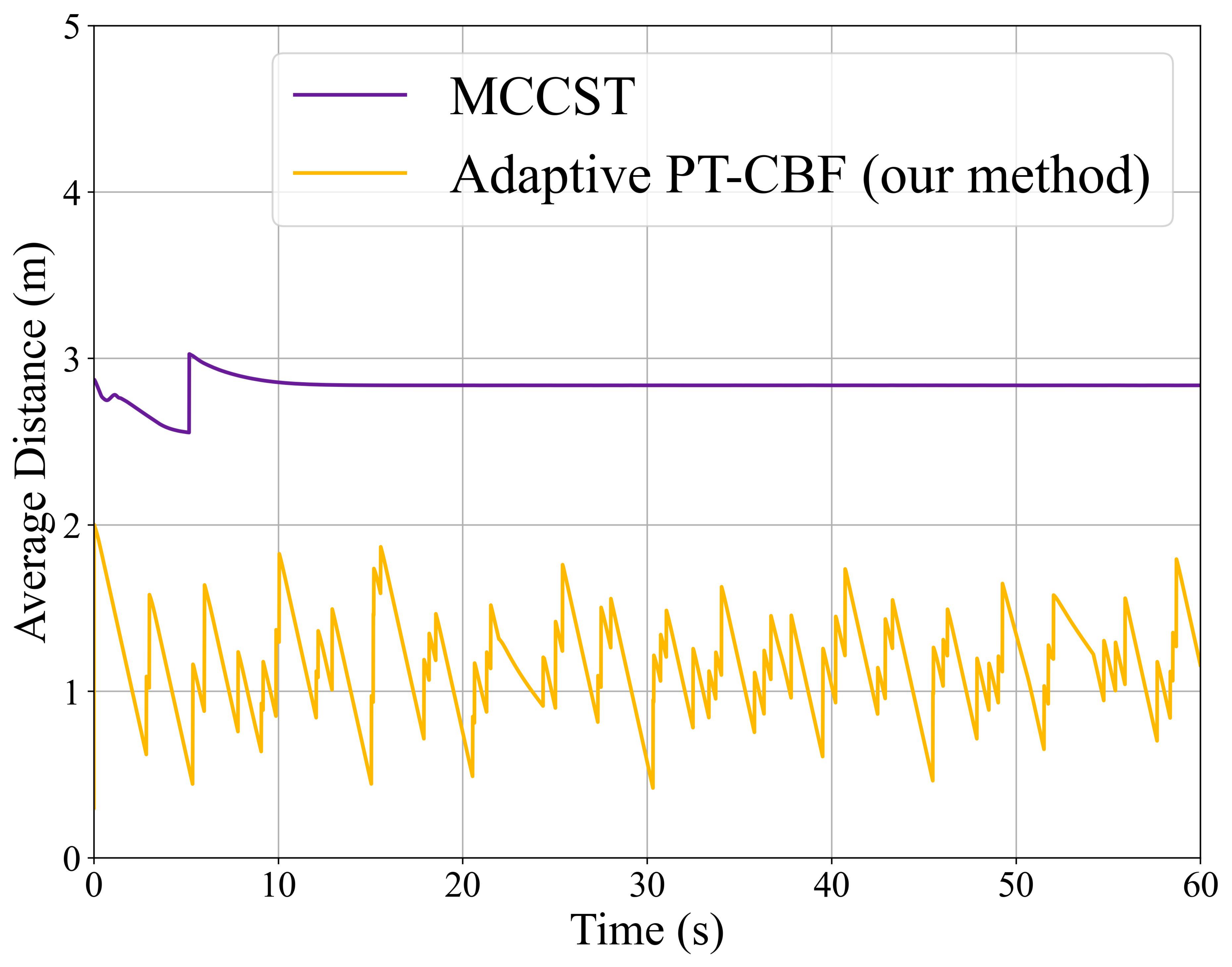}
    \caption{}
    \label{fig:distance_to_target}
  \end{subfigure}
   \begin{subfigure}{0.2\textwidth}
\includegraphics[width=\textwidth]{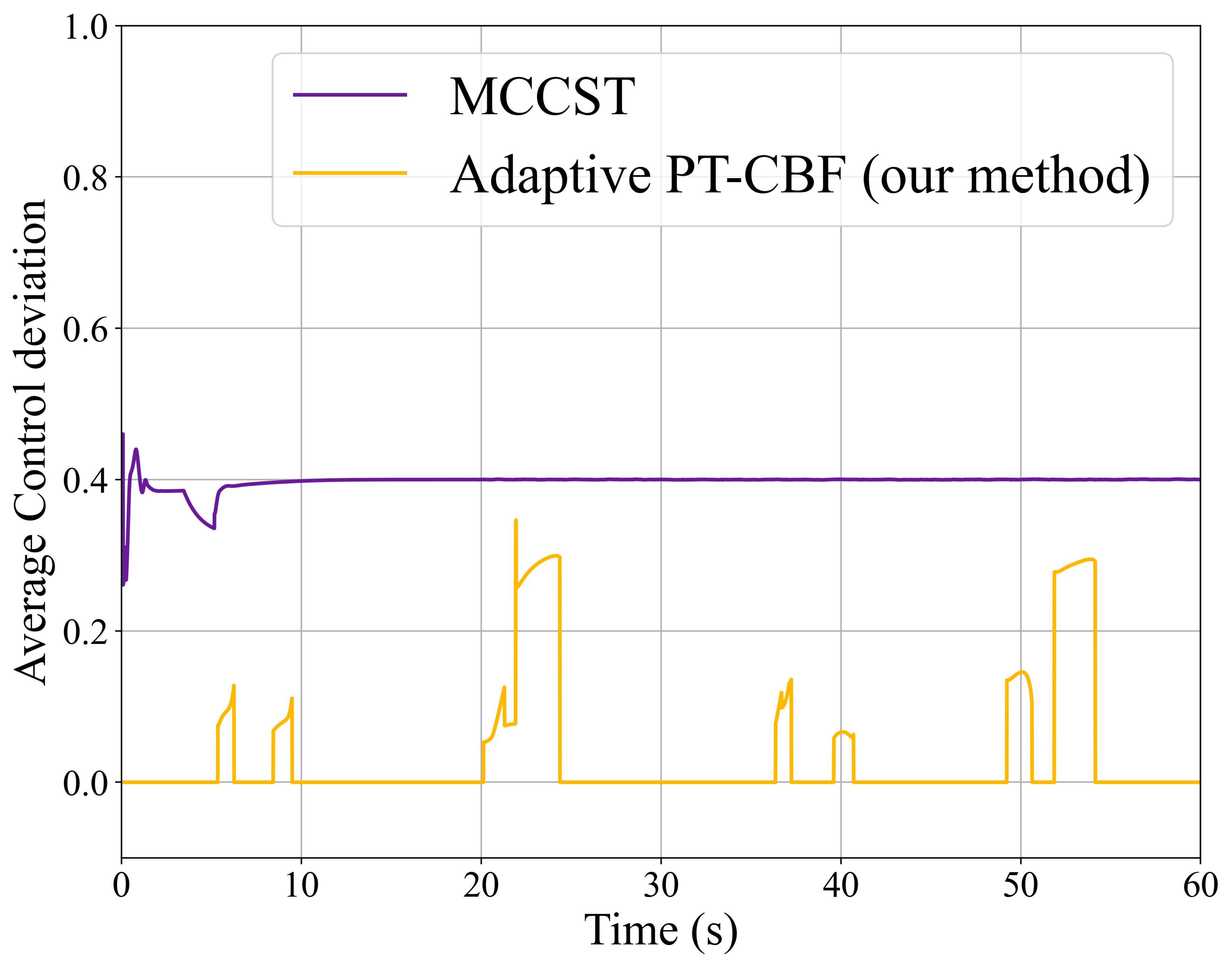}
    \caption{}
    \label{fig:cop_control_deviation}
  \end{subfigure}
\caption{Comparison between our proposed adaptive PT-CBF and the MCCST baseline. (a) Average distance to the assigned patrol target. (b) Average control deviation from the nominal controller $\frac{1}{5}
      \sum_{i=1}^{5} 
      \left\| 
          \mathbf{u}_{i}^{\text{nom}} - 
          \mathbf{u}_{i}^{\text{*}}
      \right\|$.}
\label{fig:cop_numerical}
\end{figure}
\begin{figure}[!htbp]
    \centering
\includegraphics[width=0.35\textwidth]{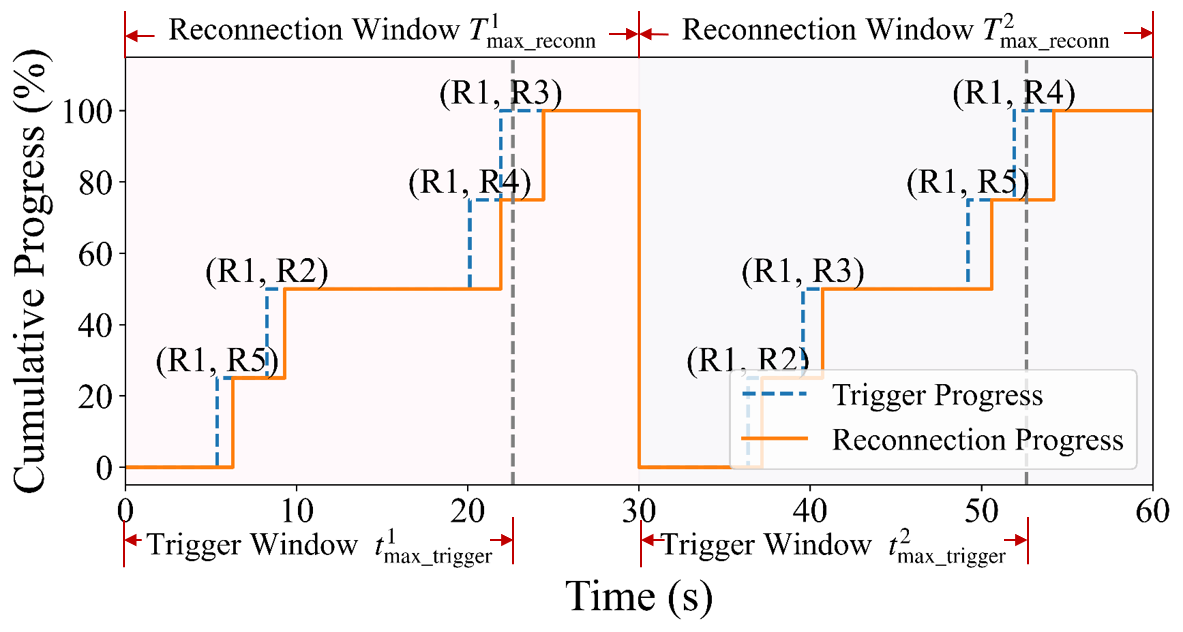}
    \caption{Cumulative trigger and reconnection progress of required communication edges.}
    \label{fig:edge_completion}
\end{figure}
\begin{figure}[!htbp]
\centering
   \begin{subfigure}{0.17\textwidth}
\includegraphics[width=\textwidth]{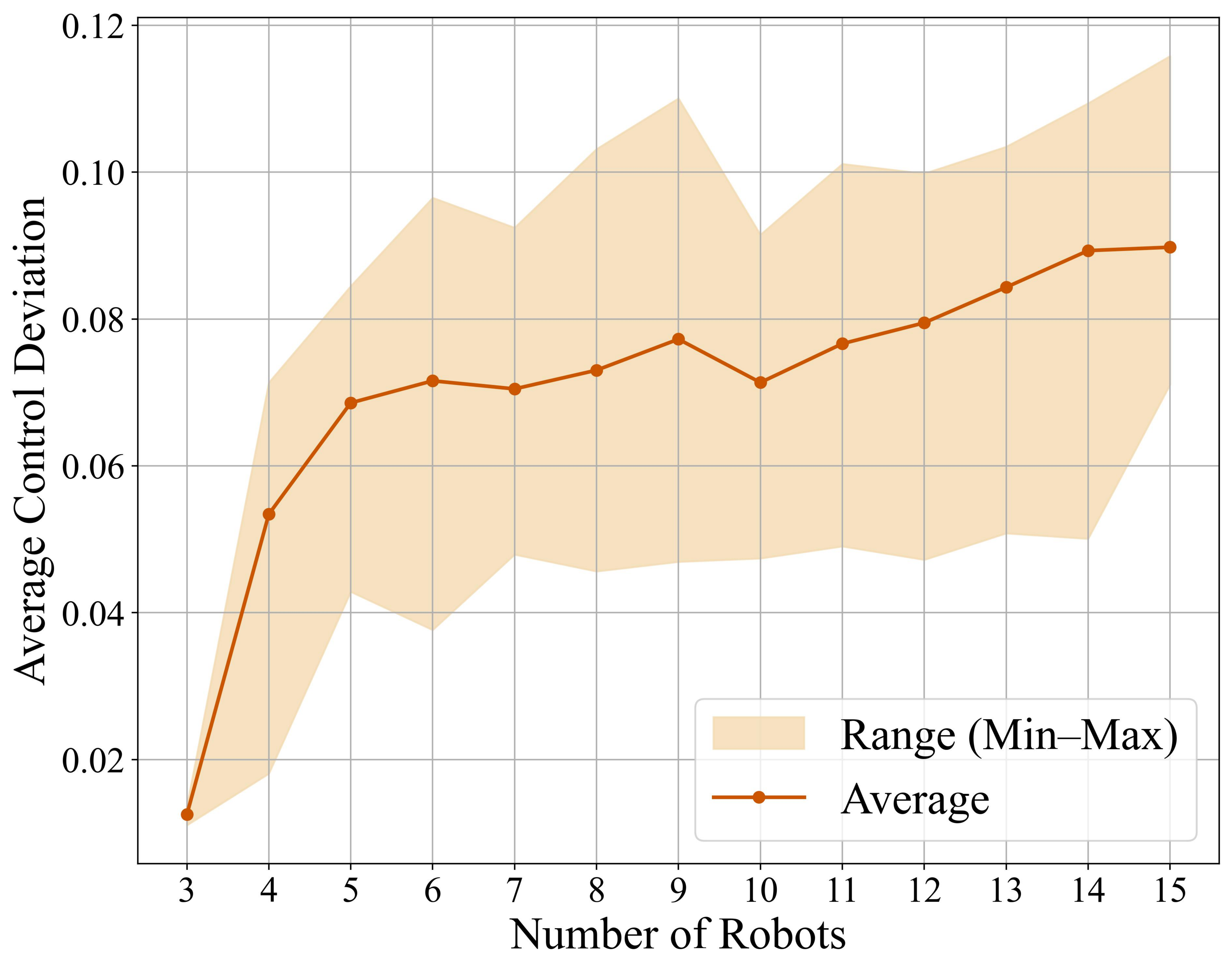}
    \caption{}
    \label{fig:quantitative_result_number_robots}
  \end{subfigure}
   \begin{subfigure}{0.17\textwidth}
\includegraphics[width=\textwidth]{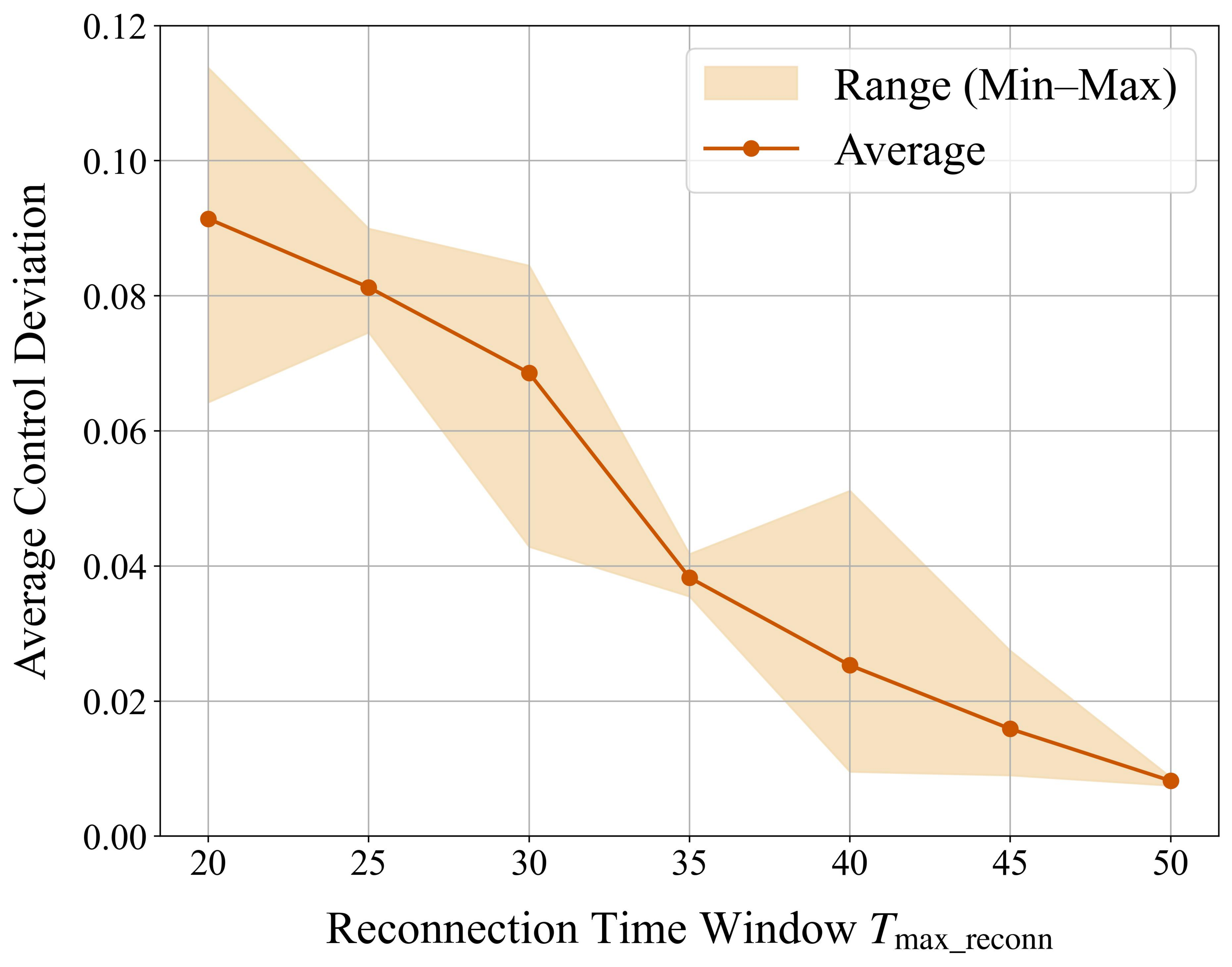}
    \caption{}
    \label{fig:quantitative_result_reconnection_window}
  \end{subfigure}
\caption{Quantitative results of our Adaptive PT-CBF. (a) average control deviation for varying team sizes. (b) average control deviation for different reconnection-window durations $T_{\mathrm{max\_reconn}}$.}
\label{fig:cop_quan}
\end{figure}

\vspace{-0.15cm}
\section{Results}
\vspace{-0.15cm}
\subsection{Simulation Example}
\vspace{-0.15cm}

\noindent \textbf{\textit{Case Study 1:}} To evaluate the proposed method, we first consider a two-robot reconnection scenario on the Robotarium platform~\citep{pickem2017robotarium}. 
The spatial component of the edge weight is kept constant, so the reconnection trigger depends only on the temporal term $\phi(t)$ in~\eqref{eq:weight}. 
This setting allows a controlled comparison among the proposed Adaptive PT-CBF, standard PT-CBF~\citep{abel2023prescribed} with $T_p=0.5\,\mathrm{s}$ and $T_p=15\,\mathrm{s}$, and FT-CBF~\citep{li2018formally}. 
The parameters in \eqref{eq:ptcbf}, \eqref{eq:ftcbf}, and \eqref{eq:ptcbf_condition} are set to $c=0.5$ and $\rho=0.9$, with communication range $R_\mathrm{c}=2.5\,\mathrm{m}$. 
Each robot follows single-integrator dynamics with bounded control input $\|\mathbf{u}_i\|\le 3\,\mathrm{m/s}$ and is nominally controlled to remain near its initial position. 
After each successful reconnection, the robots remain within communication range for $2\,\mathrm{s}$ to emulate information exchange, after which $\phi(t)$ is reset for the next cycle. We instantiate $\phi(t)$ as a logistic blow-up function to ensure triggering before $t_{\mathrm{max\_trigger}}=5\,\mathrm{s}$.

Fig.~\ref{fig:matlab_numerical} compares the three methods using four metrics. 
Since all methods use the same temporal component, Fig.~\ref{fig:matlab_numerical}(a) shows identical edge-weight evolution. 
However, Fig.~\ref{fig:matlab_numerical}(b)--(c) show that Adaptive PT-CBF drives the communication barrier and inter-robot distance to the threshold $R_\mathrm{c}$ earlier than the baselines. 
In particular, PT-CBF with $T_p=0.5\,\mathrm{s}$ fails to reconnect within the prescribed time, while $T_p=15\,\mathrm{s}$ leads to delayed response, illustrating the difficulty of manually selecting $T_p$. 
Fig.~\ref{fig:matlab_numerical}(d) further shows that Adaptive PT-CBF generates more effective motion, whereas FT-CBF weakens near the boundary and delays reconnection.

\noindent \textbf{\textit{Case Study 2:}}
We next evaluate the proposed method in a five-robot patrolling scenario. 
To ensure periodic information exchange, we predefine a target union graph 
$\bar{\mathcal{G}}^\mathrm{u}=(\mathcal{V},\bar{\mathcal{E}}^\mathrm{u})$, where Robot 1 must reconnect with the other four robots:
$\bar{\mathcal{E}}^\mathrm{u}=\{(v_1,v_2),(v_1,v_3),(v_1,v_4),(v_1,v_5)\}$.
The communication range is $R_\mathrm{c}=2.5\,\mathrm{m}$, and two consecutive $30\,\mathrm{s}$ reconnection windows are considered, within which each required edge must be established once. 
The optimized single-integrator controls from \eqref{eq:twgc_qp_obj} are mapped to unicycle commands.

For adaptive PT-CBF, we set $c=0.5$ and $\rho=0.9$. 
The temporal component in \eqref{eq:weight} is chosen as
$\phi(t)=\frac{0.4t}{t_{\mathrm{max\_trigger}}-t}+1$, 
which increases as the trigger deadline approaches. 
The maximum expected reconnection time is $7.36\,\mathrm{s}$, yielding 
$t_{\mathrm{max\_trigger}}^1=22.63\,\mathrm{s}$ and 
$t_{\mathrm{max\_trigger}}^2=52.63\,\mathrm{s}$ for the two windows. During the intervals $(0,\,22.63)$ and $(30,\,52.63)$, temporal gain $\phi(t)$ increases rapidly as $t$ approaches right endpoint of each window, ensuring that all required edges are activated before the trigger deadline.

We compare adaptive PT-CBF with the Minimum Connectivity Constraint Spanning Tree (MCCST) baseline~\citep{luo2020behavior}, which maintains persistent global connectivity. 
The reported metrics include average distance to target, control deviation, and completion progress of the required communication edges, shown in Figs.~\ref{fig:cop_numerical} and~\ref{fig:edge_completion}. 
As shown in Fig.~\ref{fig:simulation_cop}, adaptive PT-CBF allows robots to temporarily deviate from their nominal trajectories to reconnect while still completing their patrolling tasks. 
In contrast, MCCST preserves global connectivity at all times but causes excessive deviation from the nominal controller, leading to poor task progress and deadlock-like behavior.

\subsection{Quantitative Results}
We quantitatively evaluate the proposed adaptive PT-CBF framework in CoppeliaSim, with results summarized in Fig.~\ref{fig:cop_quan}. We vary the team size and the reconnection window length $T_{\mathrm{max\_reconn}}$, and test three initial configurations for each case. Fig.~\ref{fig:cop_quan}(a) shows that the average control deviation increases only slightly with team size, indicating that the added reconnection demands remain manageable. As shown in Fig.~\ref{fig:cop_quan}(b), longer windows reduce control deviation because robots have more time to reconnect with smaller control modifications. These results show that adaptive PT-CBF maintains limited control deviation across different team sizes and window lengths.
\vspace{-0.1cm}
\section{Conclusion and Future Work}
\vspace{-0.1cm}
We presented an adaptive prescribed-time CBF framework for spatio-temporal reconnection in multi-robot systems. The proposed method combines state-dependent prescribed-time selection with a spatio-temporal triggering mechanism, allowing robots to temporarily relax connectivity constraints while guaranteeing reconnection within each time window. Theoretical analysis and simulations show that the method achieves timely reconnection with lower control deviation than persistent-connectivity and baseline approaches. 
Future work will investigate dynamic and automatic selection of the desired union graph and reconnection schedule.
\bibliography{ifacconf}    

\begin{thebibliography}{11}
\providecommand{\natexlab}[1]{#1}
\providecommand{\url}[1]{\texttt{#1}}
\providecommand{\urlprefix}{URL }
\expandafter\ifx\csname urlstyle\endcsname\relax
  \providecommand{\doi}[1]{doi:\discretionary{}{}{}#1}\else
  \providecommand{\doi}{doi:\discretionary{}{}{}\begingroup \urlstyle{rm}\Url}\fi

\bibitem[{Abel et~al.(2023)Abel, Steeves, Krsti{\'c}, and Jankovi{\'c}}]{abel2023prescribed}
Abel, I., Steeves, D., Krsti{\'c}, M., and Jankovi{\'c}, M. (2023).
\newblock Prescribed-time safety design for strict-feedback nonlinear systems.
\newblock \emph{IEEE Transactions on Automatic Control}, 69(3), 1464--1479.

\bibitem[{Ames et~al.(2019)Ames, Coogan, Egerstedt, Notomista, Sreenath, and Tabuada}]{ames2019control}
Ames, A.D., Coogan, S., Egerstedt, M., Notomista, G., Sreenath, K., and Tabuada, P. (2019).
\newblock Control barrier functions: Theory and applications.
\newblock In \emph{2019 18th European control conference (ECC)}, 3420--3431. Ieee.

\bibitem[{Li et~al.(2018)Li, Wang, Pierpaoli, and Egerstedt}]{li2018formally}
Li, A., Wang, L., Pierpaoli, P., and Egerstedt, M. (2018).
\newblock Formally correct composition of coordinated behaviors using control barrier certificates.
\newblock In \emph{2018 IEEE/RSJ International Conference on Intelligent Robots and Systems (IROS)}, 3723--3729. IEEE.

\bibitem[{Luo et~al.(2020)Luo, Yi, and Sycara}]{luo2020behavior}
Luo, W., Yi, S., and Sycara, K. (2020).
\newblock Behavior mixing with minimum global and subgroup connectivity maintenance for large-scale multi-robot systems.
\newblock In \emph{2020 IEEE International Conference on Robotics and Automation (ICRA)}, 9845--9851. IEEE.

\bibitem[{{\"O}zkahraman and {\"O}gren(2020)}]{ozkahraman2020combining}
{\"O}zkahraman, {\"O}. and {\"O}gren, P. (2020).
\newblock Combining control barrier functions and behavior trees for multi-agent underwater coverage missions.
\newblock In \emph{2020 59th IEEE Conference on Decision and Control}, 5275--5282. IEEE.

\bibitem[{Pickem et~al.(2017)Pickem, Glotfelter, Wang, Mote, Ames, Feron, and Egerstedt}]{pickem2017robotarium}
Pickem, D., Glotfelter, P., Wang, L., Mote, M., Ames, A., Feron, E., and Egerstedt, M. (2017).
\newblock The robotarium: A remotely accessible swarm robotics research testbed.
\newblock In \emph{2017 IEEE International Conference on Robotics and Automation (ICRA)}, 1699--1706. IEEE.

\bibitem[{Rou{\v{c}}ek et~al.(2019)Rou{\v{c}}ek, Pecka, {\v{C}}{\'\i}{\v{z}}ek, Pet{\v{r}}{\'\i}{\v{c}}ek, Bayer, {\v{S}}alansk{\`y}, He{\v{r}}t, Petrl{\'\i}k, B{\'a}{\v{c}}a, Spurn{\`y} et~al.}]{rouvcek2019darpa}
Rou{\v{c}}ek, T., Pecka, M., {\v{C}}{\'\i}{\v{z}}ek, P., Pet{\v{r}}{\'\i}{\v{c}}ek, T., Bayer, J., {\v{S}}alansk{\`y}, V., He{\v{r}}t, D., Petrl{\'\i}k, M., B{\'a}{\v{c}}a, T., Spurn{\`y}, V., et~al. (2019).
\newblock Darpa subterranean challenge: Multi-robotic exploration of underground environments.
\newblock In \emph{International Conference on Modelling and Simulation for Autonomous Systems}, 274--290. Springer.

\bibitem[{Sabattini et~al.(2013)Sabattini, Secchi, Chopra, and Gasparri}]{sabattini2013distributed}
Sabattini, L., Secchi, C., Chopra, N., and Gasparri, A. (2013).
\newblock Distributed control of multirobot systems with global connectivity maintenance.
\newblock \emph{IEEE Transactions on Robotics}, 29(5), 1326--1332.

\bibitem[{Wang et~al.(2025)Wang, Jiang, Zhu, Song, Du, and Guan}]{wang2025collision}
Wang, C., Jiang, Y., Zhu, S., Song, L., Du, X., and Guan, X. (2025).
\newblock Collision-free connectivity topology switching and maintenance for autonomous underwater vehicles.
\newblock \emph{IEEE Transactions on Control Systems Technology}.

\bibitem[{Yang et~al.(2024{\natexlab{a}})Yang, Lyu, Zhang, Gao, and Luo}]{yang2024integrating}
Yang, Y., Lyu, Y., Zhang, Y., Gao, I., and Luo, W. (2024{\natexlab{a}}).
\newblock Integrating online learning and connectivity maintenance for communication-aware multi-robot coordination.
\newblock In \emph{2024 IEEE/RSJ International Conference on Intelligent Robots and Systems (IROS)}, 5770--5776. IEEE.

\bibitem[{Yang et~al.(2024{\natexlab{b}})Yang, Lyu, Zhang, Yi, and Luo}]{yang24decentralized}
Yang, Y., Lyu, Y., Zhang, Y., Yi, S., and Luo, W. (2024{\natexlab{b}}).
\newblock {Decentralized Multi-Robot Line-of-Sight Connectivity Maintenance under Uncertainty}.
\newblock In \emph{Proceedings of Robotics: Science and Systems}. Delft, Netherlands.
\newblock \doi{10.15607/RSS.2024.XX.005}.

\end{thebibliography}
\end{document}